\def\eqref#1{equation~\ref{#1}}
\def\1{\bm{1}}
\DeclareMathAlphabet{\mathsfit}{\encodingdefault}{\sfdefault}{m}{sl}
\SetMathAlphabet{\mathsfit}{bold}{\encodingdefault}{\sfdefault}{bx}{n}
\theoremstyle{plain}
\newtheorem{proposition}{Proposition}[section]
\theoremstyle{definition}
\newtheorem{definition}[proposition]{Definition}
\theoremstyle{remark}
\newcommand{\rev}[1]{\textcolor{black}{#1}}
\title{When does compositional structure yield compositional generalization? A kernel theory.}
\author{Samuel Lippl\\Center for Theoretical Neuroscience\\Columbia University\\New York, NY, USA\\\texttt{samuel.lippl@columbia.edu}\And
Kimberly Stachenfeld\\Google DeepMind and\\Center for Theoretical Neuroscience\\Columbia University\\New York, NY, USA\\\texttt{stachenfeld@deepmind.com}
}
\begin{document}

\maketitle

\begin{abstract}
Compositional generalization (the ability to respond correctly to novel combinations of familiar components) is thought to be a cornerstone of intelligent behavior. Compositionally structured (e.g.\ disentangled) representations support this ability; however, the conditions under which they are sufficient for the emergence of compositional generalization remain unclear. To address this gap, we present a theory of compositional generalization in kernel models with fixed, compositionally structured representations. This provides a tractable framework for characterizing the impact of training data statistics on generalization. We find that these models are limited to functions that assign values to each combination of components seen during training, and then sum up these values (``conjunction-wise additivity’’). This imposes fundamental restrictions on the set of tasks compositionally structured kernel models can learn, in particular preventing them from transitively generalizing equivalence relations. Even for compositional tasks that they can learn in principle, we identify novel failure modes in compositional generalization (memorization leak and shortcut bias) that arise from biases in the training data. Finally, we empirically validate our theory, showing that it captures the behavior of deep neural networks (convolutional networks, residual networks, and Vision Transformers) trained on a set of compositional tasks with similarly structured data. Ultimately, this work examines how statistical structure in the training data can affect compositional generalization, with implications for how to identify and remedy failure modes in deep learning models.
\end{abstract}

\section{Introduction}
Humans' understanding of the world is inherently compositional: once familiar with the concepts ``pink'' and ``elephant,'' we can immediately imagine a pink elephant. Stitching together concepts in this way lets humans generalize far beyond our prior experience, allowing us to cope with unfamiliar situations and imagine things that do not yet exist (\cref{fig:schema}a) \citep{lake_building_2017,frankland_concepts_2020}.
Understanding the basis of compositional generalization in humans and animals, and building it into machine learning models, is a long-standing and historically vexing problem \citep{fodor_connectionism_1988,battaglia_relational_2018,lake_generalization_2018,lepori_break_2023}.
While a wide range of studies have investigated the conditions under which compositionally structured (i.e.\ ``disentangled'') representations can be learned \citep{hinton_transforming_2011,higgins_beta-vae_2017,trauble_disentangled_2021,whittington_disentanglement_2023,ren_improving_2023}, it remains unclear when learning these representations is actually useful to a downstream neural network. Some work suggests that disentangled representations improve compositional generalization \citep{esmaeili_structured_2019,van_steenkiste_are_2019}. Other studies challenge this view \citep{locatello_challenging_2019,schott_visual_2022}. In general, a systematic understanding of the relationships among compositional tasks remains elusive \citep{hupkes_compositionality_2020}.

\begin{figure}
    \centering
    \includegraphics{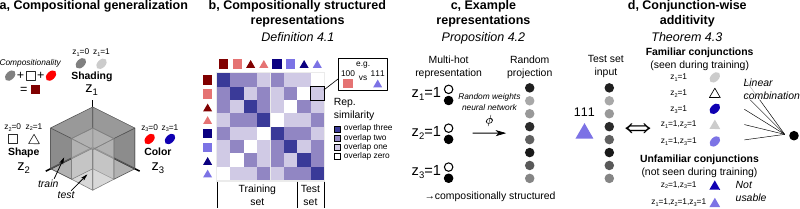}
    \caption{Overview of main theoretical findings (\cref{sec:main-theorem}). \textbf{a}, We consider inputs with several categorical components and study compositional generalization to novel combinations of components. \textbf{b}, We assume compositionally structured representations for which trials with the same number of overlaps have identical similarities. \textbf{c}, We find that a random weights neural network conserves compositional structure: if its input is compositionally structured then so is its output. \textbf{d}, We find that compositionally structured kernel models are constrained to adding up values for each conjunction seen during training (``conjunction-wise additivity'').}
    \label{fig:schema}
\end{figure}

To make progress on this question, we focus on standard statistical learning, a fundamental basis for generalization that affects almost any machine learning model. Generalization in statistical learning depends on the similarities between different inputs. In compositionally structured representations, inputs that have components in common (say, a red circle and a blue circle) are more similar to each other than inputs that do not (say, a red circle and a blue square). As a result, the compositional structure of a task is reflected in its dataset statistics, influencing the model's generalization behavior. 

To understand when and how statistical learning leads to successful compositional behavior, we developed a theory for the behavior of kernel models on compositional tasks. 
Kernel models are an important class of statistical models that are able to provide a simplified approximation to neural networks while maintaining analytical tractability \citep{jakel_generalization_2008,jakel_does_2009}. For example, kernel models accurately describe the behavior of learning and fine-tuning in neural networks under certain conditions \citep{jacot_neural_2018,malladi_kernel-based_2023}. More broadly, they provide a tractable framework for characterizing the impact of dataset statistics on generalization \citep{canatar_out--distribution_2021,canatar_spectral_2021}.

Despite their broad relevance and relative simplicity, it has remained unclear how even kernel models generalize on compositional tasks \citep[though see][see \cref{sec:related}]{abbe_generalization_2023,lippl_mathematical_2024}. To address this gap, we present a theory of compositional generalization in kernel models. We define a general class of ``compositionally structured representations'' (\cref{fig:schema}b) and a general family of compositional tasks with no constraints on the input-output mapping (ensuring broad applicability of our theory). We then theoretically characterize the compositional behavior of compositionally structured kernel models, and go beyond kernel models to empirically validate our theory in several relevant deep neural network architectures. Our specific contributions are as follows:
\begin{itemize}
    \item In \cref{sec:main-theorem}, we show that compositionally structured kernel models are constrained to summing up values implicitly assigned to each component or combination of components seen during training (\cref{fig:schema}). We call such computations ``conjunction-wise additive.''
    \item In \cref{sec:geometry}, we then characterize how representational geometry determines whether kernel models will generalize successfully on conjunction-wise additive compositional tasks, highlighting two important failure modes (memorization leak and shortcut bias). This demonstrates that disentangled representations are not sufficient for downstream compositional generalization (even on conjunction-wise additive tasks) and explains why.
    \item Finally, in \cref{sec:dnns}, we validate our theory in several deep neural network architectures, showing that it captures their behavior on conjunction-wise additive tasks.
\end{itemize}
Overall, we take a step towards a general theory of compositional generalization. Our theory systematically clarifies the compositional generalization behavior of deep networks and lays a foundation for the design of new learning mechanisms that overcome their limitations.

\section{Related work}
\label{sec:related}
\textbf{Compositional generalization.}
Compositionality is an important theme in both human and machine cognition, including visual reasoning \citep{lake_human-level_2015,johnson_clevr_2017,schwartenbeck_generative_2023}, language production \citep{hupkes_compositionality_2020}, and rule learning \citep{ito_compositional_2022,abdool_continual_2023}.
While recent breakthroughs have led to massive improvements in models' compositional capacities, these models can still fail spectacularly \citep{srivastava_beyond_2023,lewis_does_2023,west_advances_2023}.
Attempts to improve compositional generalization often leverage meta-learning \citep{mitchell_challenges_2021,wu_adaptive_2023,lake_human-like_2023} or modular architectures \citep{andreas_neural_2017}.
Constraining a network's compositional function (e.g.\ to adding up a value for each component) can guarantee modular specialization and compositional generalization \citep{lachapelle_additive_2023,wiedemer_provable_2023,wiedemer_compositional_2023,schug_discovering_2024}. However, these networks are extremely limited in the tasks they can learn (as we will show below) and end-to-end training of modular architectures without such constraints often does not result in the desired specialization \citep{bahdanau_systematic_2019,mittal_is_2022,jarvis_specialization_2023}.



\textbf{Kernel regime.}
When using gradient descent, ridge regression, or similar learning algorithms to train the readout weights of a model $f_w(x)=w^T\phi(x)$, the behavior of that model can be described in terms of the kernel $K_{\phi}(x, x')=\phi(x)^T\phi(x')$ induced by $\phi(x)$ \citep{scholkopf_kernel_2000}. Notably, when trained on a dataset $\{(x_i,y_i)\}_{i=1}^n$, such ``kernel models'' can be written in dual form as $f_a(x)=\sum_{i=1}^na_iK_{\phi}(x,x_i)$, for inferred dual coefficients $a\in\mathbb{R}^n$.
Prior work has shown that neural networks with large initial weights or wide architectures are well approximated by a kernel model (``kernel regime'') \citep{jacot_neural_2018}. In contrast, the ``feature-learning regime'' (which can be brought forth, for example, by small initial weights or small width) yields more substantial changes in neural networks' internal representations \citep{chizat_lazy_2019}.

\textbf{Norm minimization.}
Gradient descent, ridge regression, and neural networks in the kernel regime learn the readout weights that describe the training data with minimal $\ell_2$-norm \citep{soudry_implicit_2018,gunasekar_characterizing_2018,ji_gradient_2020}.
Norm minimization is a standard theoretical framework for analyzing how representational geometry and dataset statistics influence generalization \citep{canatar_out--distribution_2021,canatar_spectral_2021,canatar_spectral_2023} and we here apply it to the compositional task setting. This is similar to \citet{lippl_mathematical_2024} who characterize model behavior on a specific compositional task (transitive ordering) and \citet{abbe_generalization_2023} who characterize the inductive bias of norm minimization for inputs with binary components in the limit of infinite components. Compared to these prior works, we analyze a broader range of compositional tasks and derive exact constraints for finite numbers of components.

\textbf{Memorization leak.}
Machine learning models sometimes memorize (parts of) their training data instead of learning a generalizable rule \citep{zhang_identity_2020,dasgupta_distinguishing_2022}. This may improve generalization on long-tailed data \citep{feldman_does_2020}, but often impairs out-of-distribution performance \citep{elangovan_memorization_2021}. We find that models partially memorize their training data even when they extract the correct rule. \citet{jarvis_specialization_2023} analyze a related phenomenon in deep linear networks.

\textbf{Shortcut learning.}
Shortcut learning refers to models exploiting spurious correlations between certain features of the input data and the target \citep{shah_pitfalls_2020,nagarajan_understanding_2021}. This substantially impacts their performance out of distribution, where those correlations may not hold \citep{geirhos_shortcut_2020}.
We theoretically analyze how shortcut biases impact compositional generalization.

\section{Model and task setup}
\label{sec:setup}
\subsection{Task space}
We consider an input $x$ representing a set of underlying components $z=(z_c)_{c=1}^C$, where each component $z_c\in Z_c$ is drawn from a discrete, finite set of possible components. For example, $x$ could be a ``multi-hot'' concatenation of one-hot representations of all components (\cref{fig:schema}c). 
We make a specific assumption about how the trial-by-trial similarity of input representations $x$ relates to the underlying components $z$ (\cref{fig:schema}b):
\begin{definition}
\label{def:comp}
    A representation $x$ is ``compositionally structured'' iff its kernel $K(x,x')=x^Tx'$ only depends on the number of components that are identical between $z$ and $z'$, where $z$ and $z'$ are the components represented by $x$ and $x'$ respectively.
\end{definition}
In particular, the multi-hot representation described above is compositionally structured, but we will see below that this concept captures a richer set of representations as well.
The target, $y\in\mathbb{R}$, is given by an arbitrary function of $z$, ensuring that our framework is agnostic to the underlying compositional structure. After training models on certain combinations of components $Z^{\text{train}}\subset Z=\prod_{c=1}^CZ_c$, we assess generalization on all other combinations $Z^{\text{test}}:=Z\setminus Z^{\text{train}}$. 

Our theory characterizes constraints on model behavior for arbitrary tasks within this family, regardless of the ground-truth input-output mapping. Below we describe three example tasks that represent important building blocks for compositional reasoning in machine learning and cognitive science. We will focus on these tasks in the main text, to illustrate our general theory and analyze its consequences in more detail. In \cref{app:tasks}, we describe additional tasks captured by our theory (\cref{fig:setup}d). 
\begin{figure}
    \centering
    \includegraphics{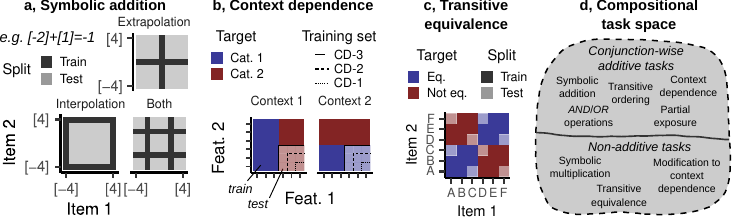}
    \caption{The compositional task space. \textbf{a}, Example training sets for symbolic addition. The grid represents pairs of components with associated values $-4,\dotsc,4$. The training set consists of certain rows and columns of this grid. \textbf{b}, Context dependence. In context 1, feat.\ 1 determines the category; in context 2, feat.\ 2 determines the category. The training sets leave out subsets of the lower right orthant. \textbf{c}, Transitive equivalence: six items are split up into two equivalence classes (e.g.\ A,B,C and D,E,F) and generalization requires transitive inference over equivalence classes (e.g.\ $A=B$ and $B=C$ implies $A=C$). \textbf{d}, Our theory partitions the task space into conjunction-wise additive (which can be solved by kernel models) and non-additive tasks (which cannot).}
    \label{fig:setup}
\end{figure}

\textbf{Symbolic addition.} Many tasks involve inferring a magnitude associated with different underlying components (e.g.\ adding handwritten digits) \citep{lorenzi_numerosities_2021,sheahan_neural_2021}. We consider two components $(z_1,z_2)$ with hidden values $v_1(z_1)$ and $v_2(z_2)$. The target is the sum of those values: $y=v_1(z_1)+v_2(z_2)$. After sufficient exposure to individual items in different combinations, an additive model could in principle generalize to novel combinations. We consider training sets consisting of entire rows and columns of the compositional dataset (\cref{fig:setup}a), varying the number of rows and columns as well as whether generalization involves interpolation or also extrapolation. 


\textbf{Context dependence.} The relevance of different stimuli often depends crucially on our current context \citep{bouton_stimulus_1999,taylor_context-dependent_2013,parker_cognitive_2014,ito_compositional_2022}. We consider a task with three input components $(z_\text{co},z_{f1},z_{f2})$. The context ($z_\text{co}$) has two possible values specifying whether $z_{f1}$ or $z_{f2}$ determine the response. 
Features have six possible values which are divided into two classes. If the model has learned this context dependence, it should generalize to novel feature combinations. We evaluate on the subset of data for which $z_{f1}$ indicates Cat.\ 2 and $z_{f2}$ indicates Cat.\ 1. In the most extreme generalization test (\textit{CD-3}), we leave out the entire subset; in easier versions, we leave out combinations of one or two features (\textit{CD-1}, \textit{CD-2}) (\cref{fig:setup}b).

\textbf{Transitive equivalence.}
Relational reasoning involves extending learned relations to new item combinations \citep{halford_relational_2010,battaglia_relational_2018}. Given an unobserved (and arbitrary) equivalence relation, the task here is to determine whether two presented items $(z_1,z_2)$ are equivalent. The model should generalize to novel item pairs using transitivity ($A=B$ and $B=C$ imply $A=C$) (\cref{fig:setup}c). This is an important instance of relational cognition (often studied as ``associative inference'' in cognitive science \citep{schlichting_memory_2015,spalding_ventromedial_2018}). Although prior work has found that kernel models often successfully generalize on a transitive \emph{ordered} relation \citep[$A>B$ and $B>C$ imply $A>C$,][]{lippl_mathematical_2024}, the behavior of kernel models on equivalence relations has remained unclear.

\subsection{Models}
Our theory characterizes kernel models with a compositionally structured input $x\in\mathbb{R}^d$ that apply a transform $\phi(x)\in\mathbb{R}^h$ and learn a linear readout, $f_w(x):=w^T\phi(x)$, using gradient descent with initial weights $w_0=0$. For $\phi$, we consider a neural network with random weights (in the infinite-width limit; see \cref{app:transformations}). Our setup captures, for example, the random feature model studied by \citet{abbe_generalization_2023} and training via backpropagation in the kernel regime.


\section{Kernel models are conjunction-wise additive}
\label{sec:main-theorem}
Our primary theoretical contribution is to characterize the full range of compositional computations that can be implemented by kernel models with compositionally structured inputs. We find that even though these models can learn arbitrary training sets, their test set behavior is restricted to adding up values for each conjunction (combination of components) seen during training. We call this motif ``conjunction-wise additivity.'' Below, we formally state our finding and explain its implications.

\subsection{Random networks yield compositionally structured representations}
We first note that a linear readout of the multi-hot input constrains the model to adding up a value for each component (``component-wise additivity''): $f(x)=\sum_{c=1}^Cf_c(z_c)$.
Although models of this class perfectly generalize on component-wise additive tasks, such as symbolic addition, they are incapable of even learning the \emph{training data} (much less generalizing) on tasks that are not component-wise additive. In particular, these models cannot learn context-dependent computations or equivalence relations --- both fundamental instances of compositional reasoning.

The nonlinear transform $\phi(x)$ can overcome this constraint; in particular, multi-layer nonlinear neural networks can learn arbitrary training data (in the infinite-width limit) \citep{hornik_multilayer_1989,cybenko_approximation_1989,rigotti_importance_2013}. Normally, the wide range of possible network architectures makes it difficult to derive general statements about their representations, but in this case we can take advantage of the fact that $\phi(x)$ will also be compositionally structured (\cref{fig:schema}c):
\begin{proposition}
\label{prop:comp-structure}
    For a random weights neural network $\phi$ with a compositionally structured input $x$, in the infinite-width limit, $\phi(x)$ is also compositionally structured.
\end{proposition}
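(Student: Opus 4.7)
The plan is to reduce the claim to the standard neural network Gaussian process (NNGP) limit. For a feedforward network with random weights and layers indexed by $\ell=0,\dots,L$, the key fact I will use is that the empirical kernel $K^{(\ell)}(x,x'):=\phi^{(\ell)}(x)^\top\phi^{(\ell)}(x')$ (with appropriate width normalization) converges in probability, as all hidden widths tend to infinity, to a deterministic limit defined by a recursion in which $K^{(\ell+1)}(x,x')$ depends on $(x,x')$ only through the three scalars $K^{(\ell)}(x,x)$, $K^{(\ell)}(x',x')$, and $K^{(\ell)}(x,x')$. This ``kernel-in, kernel-out'' property is what drives the entire argument.

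Given this tool, the proposition reduces to a short induction on $\ell$. Since $x$ is compositionally structured, $K^{(0)}(x,x)=x^\top x$ corresponds to the case where $z$ agrees with itself on all $C$ components, so it equals a single constant $\kappa_0$ independent of $x$, while $K^{(0)}(x,x')$ depends only on the overlap count $m(z,z')$ between $z$ and $z'$ by assumption. I will take as inductive hypothesis that $K^{(\ell)}(x,x')$ depends only on $m(z,z')$ and that $K^{(\ell)}(x,x)=\kappa_\ell$ is constant in $x$. The NNGP recursion ingests the triple $(K^{(\ell)}(x,x),K^{(\ell)}(x',x'),K^{(\ell)}(x,x'))=(\kappa_\ell,\kappa_\ell,g_\ell(m(z,z')))$, which is a function of $m(z,z')$ alone, and produces $K^{(\ell+1)}(x,x')$; hence the new kernel inherits dependence only on $m(z,z')$, and its diagonal is a fixed function of $\kappa_\ell$, yielding a new constant $\kappa_{\ell+1}$. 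Iterating to $\ell=L$ shows that $\phi(x)^\top\phi(x')$ depends only on $m(z,z')$, which is exactly \cref{def:comp}.

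The main obstacle will not be conceptual but technical: one has to invoke (or verify, as presumably done in \cref{app:transformations}) the infinite-width NNGP convergence for whichever architecture class is permitted, and check that each layer's kernel update really does factor through only the three pairwise inner products. For i.i.d.\ Gaussian-weight fully connected networks with pointwise nonlinearities this is classical, and the standard proofs extend straightforwardly to convolutions, residual connections, layer normalization, and similar building blocks, all of which preserve the ``kernel-in, kernel-out'' structure. Once that reduction is in place, the combinatorial inheritance of compositional structure described above is essentially a tautology, requiring no explicit knowledge of the NNGP integrals (e.g.\ the arc-cosine kernel for ReLU) and no further assumption on $\phi$ beyond its layerwise random-weight construction.
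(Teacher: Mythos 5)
Your proposal is correct and follows essentially the same route as the paper, which likewise reduces the claim to the fact that the infinite-width kernel of a random network is a deterministic layer-wise function of the input kernel (the arc-cosine/NNGP recursion of Cho \& Saul, as invoked in \cref{app:transformations}) and then notes that such a ``kernel-in, kernel-out'' map preserves dependence on the overlap count alone. Your explicit remark that the recursion also ingests the diagonal entries $K^{(\ell)}(x,x)$, and that these are constant precisely because compositional structure fixes the full-overlap similarity, is a small point the paper's informal statement glosses over but does implicitly rely on; otherwise the two arguments coincide.
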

\cref{prop:comp-structure} holds because in the infinite-width limit, the kernel of random weight neural networks depends only on the input kernel \citep[][see \cref{app:transformations}]{cho_kernel_2009}. We now consider the constraints that compositional structure imposes on the models' generalization behavior.

\subsection{Compositionally structured kernel models are conjunction-wise additive}
We find that any kernel model with a compositionally structured representation is constrained to be conjunction-wise additive. To formally state this finding, we define, for each $z\in Z$, the set of conjunctions for which $z$ overlaps with some element in the training set $z^{\text{tr}}\in Z^{\text{train}}$,
\begin{equation}
        \text{Conj}(z|Z^{\text{train}}):=\left\{J\subseteq\{1,\dotsc,C\}|\mbox{there is }z^{\text{tr}}\in Z^{\text{train}}\mbox{ such that }z_c=z_c^{\text{tr}}\mbox{ for all }c\in J\right\}.
    \end{equation}
\begin{restatable}{theorem}{thm}
    \label{thm}
    For any kernel model $f$ with a compositionally structured representation, we can find conjunction-wise functions $f_J:\prod_{c\in J}Z_c\to\mathbb{R}$, where $J\subseteq\{1,\dotsc,C\}$, such that for any input $x\in\mathbb{R}^d$ representing components $z\in Z$, the model response is given by
    \begin{equation}
    \textstyle
        f(x)=\sum_{J\in \text{Conj}(z|Z^{\text{train}})}f_J(z_J),\quad z_J:=(z_c)_{c\in J}.
        \label{eq:thm}
    \end{equation}
\end{restatable}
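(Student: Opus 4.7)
The plan is to use the dual representation of the kernel model together with an inclusion-exclusion expansion of the kernel. Since gradient descent with $w_0=0$ on a linear readout produces weights in the span of training representations, the trained model can be written in dual form as $f(x)=\sum_{i=1}^n a_i K_\phi(x,x^{(i)})$, where $K_\phi(x,x')=\phi(x)^T\phi(x')$ and $(x^{(i)})_{i=1}^n$ are the training inputs. By \cref{prop:comp-structure}, $\phi(x)$ is compositionally structured, so $K_\phi(x,x')$ depends only on the size of the overlap set $M(z,z'):=\{c:z_c=z_c'\}$; write $K_\phi(x,x')=\psi(|M(z,z')|)$ for some $\psi:\{0,\dots,C\}\to\R$.

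Next I would expand $\psi(|M(z,z')|)$ as a linear combination of the subset-match indicators $\chi_J(z,z'):=\mathbb{1}[z_c=z_c'\text{ for all }c\in J]$, indexed by $J\subseteq\{1,\dots,C\}$. Using the inclusion-exclusion identity $\mathbb{1}[M(z,z')=J]=\chi_J(z,z')\prod_{c\notin J}(1-\mathbb{1}[z_c=z_c'])$ and expanding the product, one obtains $K_\phi(x,x')=\sum_J \beta_J\,\chi_J(z,z')$ for coefficients $\beta_J=\sum_{J'\subseteq J}(-1)^{|J|-|J'|}\psi(|J'|)$. (That $\beta_J$ depends only on $|J|$ is a byproduct but is not needed.) Substituting this expansion into the dual form and swapping the order of summation gives $f(x)=\sum_J \beta_J\sum_{i}a_i\,\chi_J(z,z^{(i)})$.

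Now I would define, for each $J\subseteq\{1,\dots,C\}$, the candidate conjunction-wise function
\begin{equation}
f_J(z_J)\;:=\;\beta_J\sum_{i:\,z^{(i)}_J=z_J}a_i,
\end{equation}
which manifestly depends only on the restriction $z_J$. Since $\chi_J(z,z^{(i)})=\mathbb{1}[z_J=z^{(i)}_J]$, we recover $f(x)=\sum_{J\subseteq\{1,\dots,C\}}f_J(z_J)$. The final observation is that if $J\notin\text{Conj}(z|Z^{\text{train}})$, then by definition no training point $z^{(i)}$ satisfies $z^{(i)}_J=z_J$, so the defining sum in $f_J(z_J)$ is empty and $f_J(z_J)=0$. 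Hence the sum can be restricted to $J\in\text{Conj}(z|Z^{\text{train}})$, yielding \eqref{eq:thm}.

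The main obstacle (and the only nontrivial calculation) is verifying the kernel expansion $K_\phi=\sum_J\beta_J\chi_J$ cleanly; this hinges on compositional structure reducing $K_\phi$ to a function of $|M(z,z')|$, after which the expansion is a routine Möbius/inclusion-exclusion step on the Boolean lattice of subsets of $\{1,\dots,C\}$. Everything else — the dual representation, the swap of sums, and the vanishing of $f_J$ outside $\text{Conj}(z|Z^{\text{train}})$ — follows immediately.
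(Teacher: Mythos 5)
Your proof is correct and follows essentially the same route as the paper's: write the model in dual form, decompose the kernel over the subset lattice with coefficients obtained by M\"obius inversion of the overlap similarities (your closed-form $\beta_J$ is exactly the paper's recursively defined $\delta_J$), regroup the sum, and observe that the resulting $f_J(z_J)$ vanishes whenever $J\notin\text{Conj}(z|Z^{\text{train}})$ because the defining sum over matching training points is empty. The only cosmetic difference is that the paper defines the coefficients recursively and verifies the telescoping identity directly rather than expanding the exact-overlap indicators.
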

We prove the theorem in \cref{app:proof}. It implies that for a given test input, kernel models add up a value for each component and partial conjunction seen during training (\cref{fig:schema}d). We call this computation ``conjunction-wise additive.'' In particular, for inputs with two components, the model's behavior on the training set can be expressed as $f(x)=f_1(z_1)+f_2(z_2)+f_{12}(z_1,z_2)$. Thus, $f_{12}(z_1,z_2)$ enables the model to learn arbitrary training data. However, if $x$ represents $z\in Z^{\text{test}}$, the training set does not contain any input with the conjunction $(z_1,z_2)$ and the model is component-wise additive: $f(x)=f_1(z_1)+f_2(z_2)$. For inputs with more than two components, a conjunction-wise additive computation additionally encodes partial conjunctions seen during training, e.g., if $Z^{\text{train}}$ contains the conjunction $(z_1,z_2)$, $f(x)=f_1(z_1)+f_2(z_2)+f_3(z_3)+f_{12}(z_1,z_2)$.
\subsection{Conjunction-wise additivity constrains the tasks kernel models can solve}
\cref{thm} implies that compositionally structured kernel models can only solve tasks that can be expressed in conjunction-wise additive terms. This highlights a fundamental computational restriction, partitioning the compositional task space into tasks that can and cannot be solved by kernel models (\cref{fig:setup}d). Intriguingly, these restrictions are not caused by an architectural constraint, as the model can learn arbitrary training data (at least with a nonlinear transformation $\phi$, see \cref{sec:salience}). Rather, we highlight restrictions on how models -- without any architectural constraints -- can \emph{generalize} \citep{zhang_understanding_2021}. We now spell out the consequences of the highlighted restrictions.

First, for inputs with two components, we noted above that the model's behavior on the test set is component-wise additive.
A consequence of this is that compositionally structured models can only generalize on component-wise additive tasks. In particular, this implies that these models cannot generalize on transitive equivalence --- a fundamental instance of relational reasoning. Intriguingly, transitive ordering -- a superficially similar task -- can be solved by a kernel model. This highlights the importance of a formal perspective on compositional tasks.

More generally, to see whether a kernel model can, in principle, generalize on a task with more than two input components, we must 1) identify, for each $z\in Z^{\text{test}}$, the conjunctions seen during training, $\text{Conj}(z|Z^{\text{train}})$, and 2) determine whether the target can be written as a conjunction-wise sum (\cref{eq:thm}). For context dependence for example, for all $z\in Z^{\text{test}}$, we have seen the context-feature conjunctions 12 and 13, but not the feature-feature conjunction 23. As a result, model behavior on a test trial $x$ representing the components $z_{\text{co}},z_{f1},z_{f2}$ can be written as
$f(x)=f_1(z_{\text{co}})+f_2(z_{f1})+f_3(z_{f2})+f_{12}(z_{\text{co}},z_{f1})+f_{13}(z_{\text{co}},z_{f2}).$
Importantly, this conjunction-wise additive function can encode context dependence, specifically with the following schema: $f_{12}(z_{\text{co}},z_{f1})$ encodes the target when the context $z_{\text{co}}$ indicates $z_{f1}$ as relevant and is zero otherwise; $f_{13}(z_{\text{co}},z_{f2})$ works in the opposite way. Thus, the model can, in principle, identify a set of weights that would allow it to generalize correctly.

This example illustrates that conjunction-wise additivity tells us both \emph{whether} kernel models with compositionally structured representations can solve a certain task, and also \emph{how} they solve it.
It even tells us how to make a task non-additive: in \cref{app:rule}, we describe such a modification to context dependence that tests generalization to novel context-feature conjunctions. By helping us design hard benchmark tasks that are not solvable by kernel models, our framework grounds research into learning mechanisms that can implement other compositional computations.

What do our findings imply about compositional generalization in more general representations? In this section, we analyzed compositionally structured inputs, as we consider them the best-case scenario for compositional generalization. Accordingly, we expect that the limitations revealed by our theory are also relevant for kernel models with non-compositionally structured representations. We present two pieces of preliminary evidence for this conjecture in the appendix: First, we theoretically prove that randomly sampled representations (that are only compositionally structured in expectation) are also conjunction-wise additive in expectation (\cref{sec:noncomp}). Second, we empirically investigate compositional generalization in disentangled representation learning models trained on the DSprites dataset \citep{locatello_challenging_2019}, finding that, when averaged across randomly sampled task instances, these models are also limited to conjunction-wise additive computations (\cref{app:dsprites}). These findings suggest that the generalization class we have highlighted in this section captures fundamental limitations of kernel models beyond our theoretical setting. 

\section{Representational geometry still impacts generalization on conjunction-wise additive tasks in kernel models}
\label{sec:geometry}
Conjunction-wise additivity only determines whether kernel models can, in principle, solve a certain task. However, model generalization also depends on whether the model identifies the correct conjunction-wise function. This depends on the model's representational geometry and the training data statistics.
As noted in \cref{sec:related}, the behavior of kernel models depends on their representation $\phi(x)$ only through the induced kernel $K_{\phi}(x,x')$. In this section, we first investigate how different choices in network architecture influence its induced kernel (\cref{sec:salience}). Then, we characterize how this kernel impacts task performance on symbolic addition and context dependence (\cref{sec:failure}).
\subsection{Overlap salience characterizes compositional representational geometry}
\label{sec:salience}
To characterize compositionally structured representations, we introduce a new metric: representational salience. We formally define this metric in \cref{app:salience-definition}. Intuitively, for $k=1,\dotsc, C$, the salience $S(k;C)$ measures the unique contribution of the subpopulation representing a conjunction of $c$ components. Further, $S(k;C)$ is normalized so that all saliences sum up to one. For example, the multi-hot input only encodes single components and therefore $S(1;C)=1/C$ and, for all $k>1$, $S(k;C)=0$. Note that the model can only use conjunctions whose salience is nonzero. For example, the multi-hot input is constrained to a component-wise sum (as all other saliences are zero).

$S(k;C)$ is computed from the representational similarities $K_{\phi}(x,x')=\phi(x)^T\phi(x')$. This metric is useful because unlike $K_{\phi}(x,x')$, it makes immediately apparent how strongly different conjunctions are encoded (\cref{app:salience-use}). Furthermore, it reduces the $C+1$ similarities characterizing the space of compositionally structured representations to $C-1$ free parameters, removing the overall magnitude of the representation (by normalizing) and the baseline activity (by not considering the similarity between inputs with no overlap). This is because these parameters often have negligible impact on model behavior (\cref{app:salience-parameters}).
\begin{wrapfigure}{R}{0.5\textwidth}
\vspace{-1em}
\centering
\includegraphics{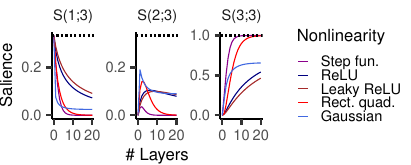}
\caption{Representational salience (for three input components) in a random weight neural network with variable numbers of layers and nonlinearities.}
\label{fig:salience}
\end{wrapfigure}

We now analyze how $S(k;C)$ evolves over different layers of a random network, using three components as an example (\cref{fig:salience}; see \cref{app:salience-analysis} for other values of $C$). The input only represents individual components and so $S(2;3)$ and $S(3;3)$ are zero. As the network gets deeper, these saliences increase and $S(1;3)$ decreases. In particular, because $S(3;3)>0$ for all networks with at least one layer, the resulting representations can learn arbitrary training data (using the full conjunction). As depth increases, $S(2;3)$ eventually decreases again, whereas $S(3;3)$ continues to increase.

In fact, we prove that for ReLU networks in the limit of infinite depth, $S(C;C)\to 1$ (\cref{prop:sal}). Put differently, very deep networks exclusively encode the full conjunction. 
As a result, these models effectively implement a look-up table, memorizing the training data without generalizing to novel combinations of components. In the next section, we find that even a moderately conjunctive representation presents significant challenges to compositional generalization.

\subsection{Kernel models suffer from memorization leak and shortcut bias}
\label{sec:failure}
While there are many readout weights that can lead to minimal error, kernel models trained with gradient descent learn the weights with minimal $\ell_2$-norm (see \cref{sec:related}). We now characterize how this inductive bias influences compositional generalization. To characterize the full range of compositionally structured representations, we compute the behavior of the model directly using the kernel (see \cref{app:methods}). Importantly, our analysis clarifies the behavior of all random weight neural networks, including those covered in the previous section. 


\textbf{Symbolic addition suffers from a memorization leak.}
We first consider an example task ($z_1,z_2\in\{[-4],\dotsc,[4]\}$; the training set consists of all trials containing a $[0]$, \cref{fig:setup}a) and an example representation ($S(1;2)=0.4$). We find that though the model perfectly learns the training cases, it underestimates the test cases by a proportional factor (\cref{fig:mem}a). 
To understand why, we consider the model's functional form on the training cases: $f(x)=f_1(z_1)+f_2(z_2)+f_{12}(z)$ (see \cref{sec:main-theorem}). Intuitively, $\ell_2$-norm minimization tends to yield distributed weights. As a result, unless the salience of the conjunction $S(2;2)$ is zero, the kernel model will associate some non-zero weight with it (i.e.\ $f_{12}(z_1,z_2)\neq 0$). This distorts the test set generalization, $f(x)=f_1(z_1)+f_2(z_2)$. 

We call the tendency to use the full conjunction (and its detrimental effect on generalization) ``memorization leak.'' To characterize it more systematically (and add mathematical rigor to the intuition above), we analytically characterize model behavior on a general family of symbolic addition tasks:
\begin{restatable}{proposition}{propadd}
\label{prop:add}
    Consider inputs $z_1,z_2\in\{[v]\}_{v\in\mathcal{V}},\mathcal{V}\subset\mathbb{R}$, with associated values $v$. We assume that the training set contains all pairs such that at least one component is $z_c\in\{[w]\}_{w\in\mathcal{W}},\mathcal{W}\subset\mathcal{V}$ and that the average value in both $\mathcal{V}$ and $\mathcal{W}$ is zero. Then, model behavior on the test set is given by
    \begin{equation}
        f([v_1],[v_2])=m(v_1+v_2),\quad m:=\tfrac{p\cdot S(1;2)}{1+(p-2)S(1;2)},\quad
        p:=|\mathcal{W}|
    \end{equation}
\end{restatable}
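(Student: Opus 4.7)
The plan is to combine Theorem~\ref{thm} with an explicit minimization of the primal $\ell_2$ norm over the feasible conjunction-wise additive solutions. For any test input $([v_1^\ast],[v_2^\ast])$ with $v_1^\ast, v_2^\ast \notin \mathcal{W}$, Theorem~\ref{thm} restricts the prediction to $f_\emptyset + f_1(z_1) + f_2(z_2)$: both singleton conjunctions lie in $\text{Conj}(z|Z^{\text{train}})$ because each $[v]\in\mathcal{V}$ co-occurs with every $[w]$, $w\in\mathcal{W}$, in training, while the full conjunction $\{1,2\}$ does not. Coordinate-swap symmetry of the kernel and of the training set forces $f_1=f_2=:g$ in the minimum-norm solution, and the training constraints fix $f_{12}([v_1],[v_2]) = v_1+v_2 - g([v_1]) - g([v_2]) - f_\emptyset$ on the training support, so $g:\mathcal{V}\to\mathbb{R}$ and the scalar $f_\emptyset$ are the only free parameters.

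Using the orthogonal Mercer decomposition of a compositionally structured kernel (cf.\ \cref{sec:salience}), the primal norm splits into contributions per conjunction level, each weighted by the inverse of its per-subset salience:
\[
\|w\|^2 \;=\; c_0\,f_\emptyset^2 \;+\; \tfrac{2}{S(1;2)}\sum_{v\in\mathcal{V}} g(v)^2 \;+\; \tfrac{1}{S(2;2)}\sum_{(v_1,v_2)\in Z^{\text{train}}} f_{12}(v_1,v_2)^2,
\]
where $c_0>0$ is the baseline weight and the saliences are normalized so that $2 S(1;2) + S(2;2) = 1$. Differentiating with respect to each $g([v^\ast])$ and using $\sum_{\mathcal{V}} v = \sum_{\mathcal{W}} w = 0$ to kill the cross-terms yields two linear equations---one for $v^\ast\in\mathcal{W}$ (which lies in $2q-1$ training pairs) and one for $v^\ast\in\mathcal{V}\setminus\mathcal{W}$ (which lies in $2p$ pairs)---each of the form $g(v^\ast)[\text{salience combo}] = S(1;2)[\text{linear in }v^\ast,\,G,\,G_\mathcal{W},\,f_\emptyset]$, where $G=\sum_v g(v)$ and $G_\mathcal{W}=\sum_{w\in\mathcal{W}} g(w)$. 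An analogous stationarity condition for $f_\emptyset$ gives a third linear equation coupling $f_\emptyset$, $G$, and $G_\mathcal{W}$.

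Summing the two stationarity equations over their respective domains and combining with the $f_\emptyset$ equation yields a homogeneous $3\times 3$ linear system in $(G, G_\mathcal{W}, f_\emptyset)$ whose determinant is a strictly positive combination of saliences, forcing $G=G_\mathcal{W}=f_\emptyset=0$. Substituting back into the equation for $v^\ast\in\mathcal{V}\setminus\mathcal{W}$ gives $g(v^\ast) = \frac{p\,S(1;2)\,v^\ast}{S(2;2)+p\,S(1;2)}$, hence $f([v_1^\ast],[v_2^\ast]) = g(v_1^\ast)+g(v_2^\ast) = m(v_1^\ast+v_2^\ast)$ with $m = \frac{p\,S(1;2)}{S(2;2)+p\,S(1;2)}$; applying $S(2;2)=1-2S(1;2)$ reproduces the stated formula. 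The main obstacle is justifying the weighting of the $L^2$-norms by inverse saliences in the primal-norm decomposition: this relies on the orthogonality of the Mercer eigenspaces for compositionally structured kernels together with the precise salience normalization convention used in the paper. Once this is pinned down, the remainder is a careful bookkeeping exercise in which the mean-zero assumptions do the essential work of decoupling the cross-terms.
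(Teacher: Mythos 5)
Your route is genuinely different from the paper's. The paper works entirely in the dual: it writes the kernel-interpolation equations for the dual coefficients $a_{ij}$, introduces the aggregated quantities $b_v,\overline{b}_w,c_w$, and sums the equations so that the mean-zero assumptions on $\mathcal{V}$ and $\mathcal{W}$ force $b=c=0$ and hence a vanishing intercept (\cref{app:addition-proof}). You instead minimize the primal norm over conjunction-wise decompositions, and your endpoint matches the paper's ($m=\tfrac{p\,S(1;2)}{S(2;2)+p\,S(1;2)}$ with $S(2;2)=1-2S(1;2)$ is the stated formula). The primal view is arguably more transparent about \emph{why} the leak occurs. However, the step you flag as the main obstacle is a genuine gap as written: the ``orthogonal Mercer decomposition'' does not directly yield your norm formula, because the natural sub-kernels of a compositionally structured kernel (constant, per-component indicator, full-conjunction indicator) have overlapping ranges; their eigenspaces are orthogonal only after centering, and your $g$ and $f_{12}$ are not centered. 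The clean repair avoids Mercer entirely: exhibit the explicit feature map $\phi(z)=\left(\sqrt{\kappa_0},\,\sqrt{\delta_1}\,e_{z_1},\,\sqrt{\delta_1}\,e_{z_2},\,\sqrt{\delta_2-2\delta_1}\,e_{(z_1,z_2)}\right)$ with $\delta_k:=\kappa_k-\kappa_0$, check that it reproduces the kernel, use the fact that the minimum-norm solution depends on $\phi$ only through the kernel, and read off that the primal objective is exactly $\kappa_0^{-1}f_\emptyset^2+\delta_1^{-1}\sum_v\left(f_1(v)^2+f_2(v)^2\right)+(\delta_2-2\delta_1)^{-1}\sum_z f_{12}(z)^2$ under the block bijection $w\leftrightarrow(f_\emptyset,f_1,f_2,f_{12})$. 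Since $S(1;2)=\delta_1/\delta_2$ and $S(2;2)=(\delta_2-2\delta_1)/\delta_2$, this is your weighting up to a harmless overall factor. You must also state explicitly that the minimizer sets $f_{12}\equiv0$ off the training support (those coordinates are unconstrained and only add norm), which is what licenses restricting the interaction sum to $Z^{\text{train}}$ and is what makes the test prediction $f_\emptyset+g(z_1)+g(z_2)$ without appeal to \cref{thm}; and the degenerate case $S(2;2)=0$, where your inverse-salience weight blows up, needs the separate (trivial) observation that $f_{12}\equiv 0$ and $m=1$.

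The second concern is that the decisive computation is only sketched. The count ``$2q-1$'' of training pairs containing a given $w\in\mathcal{W}$ is off under the paper's convention $q=|\mathcal{V}|-|\mathcal{W}|$ (every $v\in\mathcal{V}$ pairs with $w$ in either coordinate, giving $2(p+q)-1$ pairs), and the claims that the homogeneous $3\times3$ system in $(G,G_{\mathcal{W}},f_\emptyset)$ has nonzero determinant and that back-substitution yields exactly $g(v^\ast)=\tfrac{p\,S(1;2)\,v^\ast}{S(2;2)+p\,S(1;2)}$ are asserted rather than computed. These are precisely the steps where the paper's dual bookkeeping does real work (its analogues are the summed equations yielding $\delta_1 b_v$ and the $2\times2$ system for $b$ and $c$), so until they are carried out your argument should be regarded as a correct and attractive strategy with the central verification still owed.
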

We prove the proposition in \cref{app:addition}. It implies that for this entire task family, model generalization is distorted by a proportional factor $m$ (\cref{fig:mem}b). The formula implies that $m<1$ as long as $S(1;2)<\tfrac{1}{2}$ (i.e.\ as long as $S(2;2)>0$). Further, $m$ is smaller for lower $S(1;2)$ and smaller training set size $p$. (Note that $\mathcal{W}$ corresponds to the rows/columns making up the example training sets in \cref{fig:setup}a.) Interestingly, these are the only two factors that influence $m$. In particular, even though interpolation is often seen as easier than extrapolation, this does not impact model behavior here.

More broadly, the memorization leak is a ubiquitous issue for statistical learning models trained on compositional tasks: $\ell_2$-norm minimization tends to yield distributed weights and so if the conjunctive population is represented, the model will generally end up partially relying on this conjunction. As shown above, this necessarily distorts generalization. This issue also arises in tasks that involve directly decoding specific components, a popular task for evaluating disentangled representations \citep{locatello_challenging_2019,schott_visual_2022} (see \cref{app:partial} for a minimal example).
\begin{figure}
    \centering
    \includegraphics{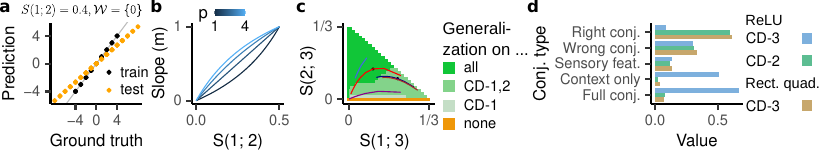}
    \caption{Kernel models' behavior on \textit{(a,b)} symbolic addition and \textit{(c,d)} context dependence. \textbf{a}, Model predictions on training and test set plotted against ground truth for an example case ($S(1;2)=0.4$, $\mathcal{W}=\{0\}$). \textbf{b}, The slope of the test set as a function of $S(1;2)$ and the training set size $p=|\mathcal{W}|$. \textbf{c}, Generalization on context dependence as a function of representational salience. Trajectories of networks with different nonlinearities are highlighted (color scale see \cref{fig:salience}). \textbf{d}, Coefficients of the different conjunction types for two example networks with three layers and different nonlinearities.}
    \label{fig:mem}
\end{figure}

\textbf{Context dependence suffers from a shortcut bias.}
Next, we empirically analyzed model generalization on context dependence across different representational geometries and training sets. We found that on a given task, each model either generalizes with 100\% or 0\% accuracy. For \textit{CD-3} (the task version leaving out the largest subset of feature combinations), the model only generalizes when $S(2;3)$ is high relative to $S(1;3)$ (\cref{fig:mem}c, dark green area). As a result, compositional generalization is highly sensitive to the nonlinearity and depth of the network. In contrast, for \textit{CD-2} and \textit{CD-1}, a much wider range of representational geometries generalizes successfully.

To understand this phenomenon, we determined the total magnitude of model weights associated with the different conjunctions (\cref{fig:mem}d). We found that unsuccessful models (e.g.\ the blue color in \cref{fig:mem}d) had much larger magnitudes associated with the context component and the full conjunction. Notably, on \textit{CD-3}, context is highly correlated with the target and can predict $\tfrac23$ of the training data. Models with high $S(1;3)$ (context) and $S(3;3)$ (full conjunction) exploit this context shortcut and use the full conjunction to learn the remaining training data. This strategy explains why these models fail to generalize to the test set. For \textit{CD-2}, context is much less predictive on the training data (accuracy of $\tfrac{9}{16}$). This explains why only very low $S(2;3)$ yields failure to generalize on \textit{CD-2} or \textit{CD-1}.

Our analysis shows how model and task structure interact to either give rise to a generalizable rule or a statistical shortcut. This highlights that for compositional tasks with strong spurious correlations, model behavior will be highly sensitive to architectural hyperparameters affecting the representational geometry such as depth and nonlinearity. Indeed, this sensitivity to minor experimental choices could explain why the literature has been so divided on the usefulness of disentangled representations.


\section{Our theory can describe the behavior of deep networks on conjunction-wise additive tasks}
\label{sec:dnns}
\begin{figure}
    \centering
    \includegraphics{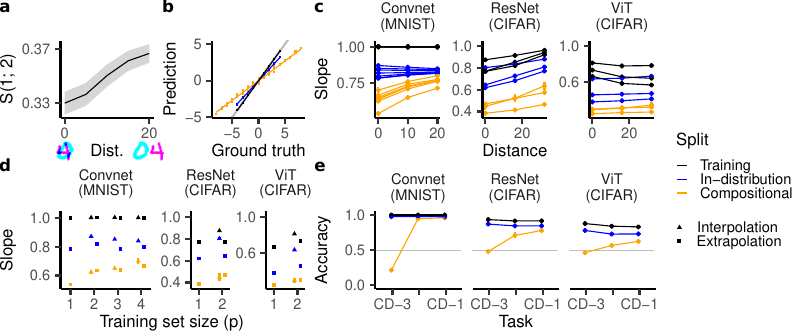}
    \caption{Testing our theory in deep networks trained on MNIST and CIFAR versions of compositional tasks. Ranges indicate mean $\pm$ one std.\ error (often too small to be visible). \textbf{a}, $S(1;2)$ in an intermediate ConvNet layer for different distances between digits. Lower distance yields a more conjunctive representation. \textbf{b}, Average model prediction for each combination of components plotted against the ground truth (MNIST, distance of zero, $\mathcal{W}=\{0\}$). Generalization on the compositional split is distorted by a proportional factor. \textbf{c}, \textbf{d}, Slope of this linear relationship across all datasets as a function of \textit{c}, distance (each line corresponding to a particular $\mathcal{W}$) and \textit{d}, training set (for a distance of zero). For MSE instead of slope, see \cref{fig:supp-mse}. \textbf{e}, Accuracy on all variants of context dependence.}
    \label{fig:dnns}
\end{figure}
So far, our analyses have been limited to simple kernel models, for their analytical tractability. We next tested whether the insights developed from these simple models extend to a broader class of models: large-scale neural networks. We considered symbolic addition and context dependence on inputs created by concatenations of images from MNIST \citep{lecun_gradient-based_1998} or CIFAR-10 \citep{krizhevsky_learning_2009} (see \cref{app:methods}). Each component corresponds to random samples from a particular category rather than the one-hot input considered so far and we study both in-distribution and compositional generalization.
Notably, different image categories are not necessarily equally correlated with each other. To control for this, we randomly permuted the assignment of categories to components for each ($n=10$) experiment.
We considered several relevant vision architectures, which we all trained with backpropagation: convolutional networks \citep[ConvNets,][]{lecun_backpropagation_1989} (trained on MNIST) and residual networks \citep[ResNets,][]{he_deep_2016} and Vision Transformers \cite[ViTs,][]{dosovitskiy_image_2020} (trained on CIFAR-10).

\textbf{Spatial distance of components impacts salience in internal representations.}
\label{dnns:sal}
First, we examined how changes in the input structure impact the networks' representational geometry. We hypothesized that the ConvNets' local weight structure should produce a more conjunctive representation for digits that are closer together. To test this, we determined $S(1;2)$ in an intermediate layer of the network, averaging over different instances of all digits. We found that $S(1;2)$ was indeed smaller for lower distances (\cref{fig:dnns}a), indicating a more conjunctive population. This suggests that varying the distance between two digits provides a practical way of manipulating $S(1;2)$; below we use this insight to test predictions about how conjunctivity influences generalization behavior.

\textbf{Deep networks tend to implement conjunction-wise additive computations.}
Conjunction-wise additivity imposes a substantial computational restriction: on test set inputs, models can only add up values assigned to each conjunction of components seen during training. We therefore investigated whether the deep networks' computations could be captured in these terms --- a nontrivial prediction as these networks can in principle express arbitrary compositional computations. Remarkably, we found that a conjunction-wise additive model was highly predictive of the model responses (see \cref{app:additivity}). This indicates that large-scale neural networks tend to implement conjunction-wise additive computations, at least when trained on conjunction-wise additive tasks. Our theory suggests that they do so because the statistical structure of the dataset makes such a computation natural.

\textbf{Deep networks are impacted by a memorization leak on symbolic addition.}
Having found that our theory captures the networks' general computational structure, we investigated whether it was also able to explain their specific performance. We first considered symbolic addition, varying the size of the training set and whether generalization required interpolation or also extrapolation. We trained the ConvNets on 20,000 samples and the ResNets and ViTs on 40,000 samples (\cref{app:addition-vision}).

We tested three theoretical predictions made by \cref{prop:add}. First, our theory predicts that the model's test set response should be distorted by a proportional factor. To test this, we plotted the average model prediction for each combination of categories against the ground truth. We found that model predictions were indeed compressed relative to the ground truth, but still exhibited a strong linear relationship (\cref{fig:dnns}b). This was also the case for ResNets and ViTs, which exhibited a slightly noisier linear relationship (\cref{fig:supp-slopes}). The memorization leak therefore affects generalization in large-scale networks as well. This is especially significant as our theoretical argument suggests that memorization leaks likely arise for a broad range of compositional tasks.

We then estimated the slope of this linear dependency for each dataset. Our theory predicts that the compositional generalization slope should increase with increasing $S(1;2)$ (i.e.\ increasing distance between components, \cref{fig:dnns}a). On ConvNets, we found that higher distance indeed increases the slope (\cref{fig:dnns}c). This was not due to an increase in task difficulty, as in-distribution generalization is not systematically affected by distance. On ResNets and ViTs, higher distance also yielded a higher slope on the compositional generalization set (though the effect was much subtler for ViTs). (Notably, these networks did not perfectly predict the training split, which may affect the results.)

Finally, our theory predicts that larger training sets should increase the compositional generalization slope. Our experiments confirmed this prediction (\cref{fig:dnns}d). Further, whether the training set required interpolation or also extrapolation did not systematically affect the resulting slope, as predicted by our theory (though interestingly, it did affect in-distribution generalization). Notably, there was one exception to the latter finding: for $p=4$, the extrapolation dataset had a smaller slope than the interpolation data set. Our theory can therefore explain much but not all of the deep network behavior.

\textbf{Deep networks are impacted by a shortcut bias on context dependence.}
Lastly, we trained the ConvNets on an MNIST version of context dependence using 30,000 training samples and the ResNets and ViTs on a CIFAR-10 version of the task using 40,000 training samples. Again, their behavior was aligned with the kernel theory's predictions, having better-than-chance accuracy on \textit{CD-1} and \textit{CD-2}, but having worse-than-chance accuracy on \textit{CD-3} (\cref{fig:dnns}e). This confirms that the deep networks also suffered from a context-driven shortcut on \textit{CD-3} (see also \cref{fig:cdm-mnist}).

\section{Discussion}
Humans often generalize to new situations by stitching together concepts and knowledge from prior experience. Despite the broad importance of this ability (both for cognitive science and machine learning), a general theory of when and how neural networks accomplish compositional generalization has remained elusive. Here we have taken a step towards formalizing this relationship by characterizing kernel models with compositionally structured representations, a framework that captures neural network training or finetuning in the kernel regime and more broadly lets us understand the impact of representational geometry on generalization. We found that they implement a specific compositional computation (``conjunction-wise additivity''). We then investigated how dataset statistics and representational geometry impact successful generalization, highlighting two failure modes arising from the inductive bias of gradient descent (memorization leak and shortcut bias). Finally, we validated our theory in deep neural networks trained on natural image data.


Our results show how simple statistical models can implement (or at least approximate) abstract rules like context dependence. However, they also highlight that building in compositional structure is often insufficient for compositional generalization. Indeed, contextual generalization is highly sensitive to the specific representational geometry and training data. This may explain why investigations into compositional generalization and the benefits of disentangled representations have yielded such inconsistent results. Overall, our insights highlight the utility of kernel models in systematically investigating model generalization. We here used these models to characterize deep neural networks, but they could be equally useful for better understanding human compositional generalization.

While our work covers a broad range of different tasks, a number of limitations remain. We demonstrate that our theory captures many qualitative phenomena in deep neural networks, but do not provide any quantitative bounds. Further, though out of scope here, it would be interesting to test this theory in extremely large-scale models, e.g.\ large language models. Future work should also consider other input/output formats (e.g.\ sequences with variable length) and non-compositionally structured representations. Most importantly, our theory is limited to a particular learning mechanism (kernel models). Other learning mechanisms could overcome the limitations highlighted in \cref{sec:main-theorem,sec:geometry} (we give an initial example in \cref{app:equivalence}).
By helping us design new benchmark tasks not solvable by kernel models, our theory provides an important foundation for such advances.

\section*{Acknowledgments}
We are grateful to the members of the Center for Theoretical Neuroscience for helpful comments and discussions. We thank Elom Amematsro, Ching Fang, and Matteo Alleman for detailed feedback, and Zeb Kurth-Nelson for comments on the manuscript. The work was supported by NSF 1707398 (Neuronex), Gatsby Charitable Foundation GAT3708, and the NSF AI Institute for Artificial and Natural Intelligence (ARNI).


\bibliography{references}
\bibliographystyle{iclr2025_conference}
\vfill
\pagebreak
\appendix
\section{Mathematical analysis}
\subsection{Generalized compositionally structured representations}
\label{app:compositional-structure}
In the main text, we considered ``compositionally structured'' representations. This is the class of representations $\phi(z)$ whose kernel $K_{\phi}(z,z')=\phi(z)^T\phi(z')$ only depends on the number of components $z$ and $z'$ have in common. Here we define a slightly more general class of representations whose kernel depends on the \emph{identity} of the components they have in common, i.e.\
\begin{equation}
    O(z,z'):=\{c=1,\dotsc,C|z_c=z_c'\}.
    \label{eq:overlap}
\end{equation}
Thus, pairs of representations overlapping in the first component may have a different similarity than pairs of representations overlapping in the second component, but any pair of representations overlapping in the first component still need to have the same similarity. This case captures, for example, cases where one feature is more salient than another and therefore influences the representational similarity more strongly, or where conjunctions between certain components are more saliently represented than conjunctions between other components. For example, perhaps our input consists of two objects with different shapes and colors. In that case, we may wish to represent the conjunction between shape and color of the first object more strongly than the conjunction between the shape of the first object and the color of the second.

Note that we consider the constrained definition of compositionally structured representations in the main text purely for didactical purposes. In particular, our observations in \cref{app:transformations,app:proof} extend to all generalized compositionally structured representations.
\subsection{Why are we considering compositionally structured representations?}
\label{app:compositional-motivation}
So far, we have motivated the concept of compositionally structured representations in terms of the fact that disentangled representations as well as many nonlinear transforms of disentangled representations are compositionally structured. Here we discuss a more conceptual motivation for them: compositionally structured representations guarantee that the only basis for generalization is the compositional nature of the data. This is because we ensure that the model has no knowledge about certain components that may be more similar to each other. Otherwise, the component-wise similarity could serve as a basis for generalization. For example, in transitive equivalence (\cref{fig:setup}d), if items that are in the same equivalence class are represented as more similar to each other, that would serve as a basis for generalization on the task that is not compositional in nature. For this reason, we are particularly interested in compositionally structured representations, as they ensure that kernel models are only able to generalize compositionally.

Characterizing compositional generalization in non-compositionally structured representations requires us to meaningfully distinguish between compositional and non-compositional aspects of their generalization. \rev{We present an example of such an analysis in \cref{sec:noncomp}, proving that if the non-compositional representational components are random, the models still generalize in a conjunction-wise additive manner in expectation.}
\subsection{Nonlinear transformations and compositional structure}
\label{app:transformations}
A broad range of transforms $\phi$ induces a kernel $K_{\phi}(z,z')=\phi(z)^T\phi(z')$ that only depends on the input similarity  $K(z,z')=z^Tz'$. In particular (as noted in the main text), this condition is satisfied by the hidden layers and neural tangent kernel of randomly initialized neural networks (in the infinite-width limit) \citep{han_fast_2022}:
\begin{definition}
    Given an input $z\in\mathbb{R}^{d}$, a network depth $L\geq2$, and a set of widths $H_1,\dotsc,H_{L}\in\mathbb{N}$, we define a neural network by recursively defining the operation $\phi^{(l)}$ of the $l^\text{th}$ layer as
    \begin{equation}
    a^{(0)}:=z,\quad a^{(l)}:=\phi^{(l)}(a^{(l-1)}):=\sigma\left(\tfrac{1}{\sqrt{H_l}}\left(W^{(l)}a^{(l-1)}+b^{(l)}\right)\right),\quad W^{(l)}\in\mathbb{R}^{H_l\times H_{l-1}}
    \end{equation}
    where $W^{(l)}$ and $b^{(l)}$ are i.i.d.\ sampled from a random distribution and $\sigma:\mathbb{R}\to\mathbb{R}$ is a nonlinearity. The complete transform is then given by $\phi:=\phi^{(L)}\circ\dotsb\circ\phi^{(1)}$.
\end{definition}
The infinite-width limit is characterized by $H_1,\dotsc,H_{L-1}\to\infty$ (the order of these limits does not matter). This setup includes, in particular, the random feature model investigated by \cite{abbe_generalization_2023}.

Further, the condition is met by most commonly used kernel functions including any radial basis function that depends on the Euclidean $\ell_2$-norm (e.g.\ the Gaussian kernel). Any nonlinear transform that meets this condition also conserves compositional structure, i.e.\ if $x$ is compositionally structured then so is $\phi(x)$. This allows us to derive computational restrictions on this broad range of models.
\subsection{Proof of \cref{thm}}
\label{app:proof}
\thm*
\begin{proof}
Because the kernel $K$ is compositionally structured, its similarity $K(z,z')$ only depends on the overlap $O(z,z')\subseteq\{1,\dotsc,C\}$ (see definition in \cref{eq:overlap}). We denote the similarity for inputs overlapping in $S\subseteq\{1,\dotsc,C\}$ by $\kappa_S$ and define set of training items overlapping with $z\in Z^{\text{test}}$ in $S$ as
\begin{equation}
    Z^{\text{train}}(z,J):=\left\{z^{\text{tr}}\in Z^{\text{train}}|\forall_{c\in J}z_c=z^{\text{tr}}_c\right\}.
\end{equation}
The key idea is to decompose $Z^{\text{train}}$ into these different overlaps in order to separate the sum into its components. However, by our definition, the datasets $Z^{\text{train}}(z,J)$ are not disjoint. Indeed, $S\subseteq S'$ implies $Z^{\text{train}}(z,J)\subseteq Z^{\text{train}}(z,J')$ and in particular $Z^{\text{train}}(z,\emptyset)=Z^{\text{train}}$. To adjust for this, we define $\delta_J$ as the similarity added by $\kappa_S$ to the similarity between conjunctions with one component fewer, recursively defining
\begin{equation}
    \delta_{\emptyset}=\kappa_{\emptyset},\quad\delta_J=\kappa_J-\sum_{J'\subsetneq J}\delta_{J'}.
\end{equation}
We then decompose
\begin{equation}
    f(z)=\sum_{z^{\text{tr}}\in Z^{\text{train}}}a_{z^{\text{tr}}}K(z,z^{\text{tr}})=\sum_{J\subseteq\{1,\dotsc,C\}}\delta_S\sum_{z^{\text{tr}}\in Z^{\text{train}}(z,J)}a_{z^{\text{tr}}}.
\end{equation}
This equality obtains because for each $z\in Z,z^{\text{tr}}\in Z^{\text{train}}$,
\begin{equation}
    \sum_{J:z^{\text{tr}}\in Z^{\text{train}}(z,J)}\delta_J=\delta_{O(z,z^{\text{tr}})}+\sum_{J'\subsetneq O(z,z^{\text{tr}})}\delta_{J'}=\kappa_{O(z,z^{\text{tr}})}=K(z,z^{\text{tr}}),
    \label{eq:prop2-1}
\end{equation}
which is true by definition.
We note that for $J\notin\text{Conj}(z|Z^{\text{train}})$, $Z^{\text{train}}(z,J)=\emptyset$. Defining
\begin{equation}
    f_J(z):=\delta_J\sum_{z^{\text{tr}}\in Z^{\text{train}}(z,J)}a_{z^{\text{tr}}},
\end{equation}
proves the proposition.
\end{proof}
\subsection{Non-compositionally structured representations}
\label{sec:noncomp}
So far, we have considered representations that are compositionally structured. In practice, however, representations will almost never be exactly compositionally structured. Even when there is no specific similarity structure within components, there will be random noise in the representations. Here we characterize one such scenario, demonstrating that in expectation, these representations still yield conjunction-wise additive computations.
\rev{
\begin{proposition}
\label{prop:noncomp}
    Consider an input with $C$ components. We consider a representation that represents each component and conjunction of components $z_J$, $J\subseteq\{1,\dotsc,C\}$ by a random vector $x_J[z_J]\in\mathbb{R}^d$, $x_J\sim\mathcal{N}(0,\sigma_k^2)$, $\sigma_k^2>0$, where $k=|J|$. The representation itself is given by the sum of all these vectors: $x=\sum_{J\subseteq\{1,\dotsc,C\}}x_J[z_J]$. All $x_J[z_J]$ are sampled independently from each other. This means that the similarity between different components as well as the entanglement of different components varies randomly. As a result, the kernel regression estimator $f(z)$ is not conjunction-wise additive. However, its expectation, $\mathbb{E}[f(z)]$, is conjunction-wise additive, indicating that all deviations from conjunction-wise additivity arise from random noise and cannot be used for systematic generalization.
\end{proposition}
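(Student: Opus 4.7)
The plan is to exploit the independence of the random vectors $x_J[z_J]$ across different $(J, z_J)$ pairs. Writing $X \in \mathbb{R}^{d \times n}$ for the matrix whose columns are the training representations $x(z^{(i)})$ and $K = X^\top X$ for the training Gram matrix, the minimum-norm kernel estimator can be put in primal form $f(z) = y^\top K^{+} X^\top x(z)$, where $K^{+}$ is the Moore--Penrose pseudoinverse. Since this is linear in $x(z)$, I would split $x(z) = x^{\text{seen}}(z) + x^{\text{unseen}}(z)$ with $x^{\text{seen}}(z) = \sum_{J \in \text{Conj}(z \mid Z^{\text{train}})} x_J[z_J]$ and $x^{\text{unseen}}(z) = \sum_{J \notin \text{Conj}(z \mid Z^{\text{train}})} x_J[z_J]$, and handle the two pieces separately.

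The key observation is that for each $J \notin \text{Conj}(z \mid Z^{\text{train}})$, the definition of $\text{Conj}$ implies no training point $z^{(i)}$ satisfies $z^{(i)}_J = z_J$, so the specific random vector $x_J[z_J]$ never appears as a summand in any training representation. Combined with the independent sampling assumption, this makes $x_J[z_J]$ independent of the entire training data, hence of both $X$ and $K^{+}$. Since $\mathbb{E}[x_J[z_J]] = 0$, conditioning on $X$ and using the tower property yields $\mathbb{E}[y^\top K^{+} X^\top x^{\text{unseen}}(z)] = 0$, so
\begin{equation}
    \mathbb{E}[f(z)] = \sum_{J \in \text{Conj}(z \mid Z^{\text{train}})} g_J(z_J), \qquad g_J(z_J) := \mathbb{E}\bigl[y^\top K^{+} X^\top x_J[z_J]\bigr].
\end{equation}

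It remains to check that each $g_J$ is genuinely a function of $z_J$ alone. This follows because both $X$ (built from the random vectors indexed by components of the training data) and $x_J[z_J]$ (which depends only on $J$ and the value $z_J$) have joint distributions that are unaffected by the values of $z_c$ for $c \notin J$. Hence $\mathbb{E}[f(z)]$ is precisely of the conjunction-wise additive form guaranteed by \cref{thm}.

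The main obstacle I anticipate is the careful handling of rank-deficient $K$: the primal-form identity above requires the pseudoinverse, and I would need to verify that gradient descent from zero initialization indeed yields this minimum-norm solution in this non-compositionally-structured setting, so that the independence-based decoupling still applies. A secondary subtlety is ordering the expectations in the tower argument so that the randomness of $K^{+}$ (which depends on $X$) is handled before marginalizing the test-side noise $x^{\text{unseen}}(z)$.
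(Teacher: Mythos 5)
Your proposal is correct and follows essentially the same route as the paper's proof: both decompose the test representation into the part indexed by $J\in\text{Conj}(z\mid Z^{\text{train}})$ and the remainder, then use the independence of the unseen vectors $x_J[z_J]$ from the training data together with their zero mean and the tower property to show the remainder vanishes in expectation. Your added care about the pseudoinverse in the rank-deficient case and the explicit check that each $g_J$ depends only on $z_J$ are minor refinements of, not departures from, the paper's argument.
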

\begin{proof}
    We consider the training representation $X=(x[z_J])_{z\in L^{\text{train}}}$, $X\in\mathbb{R}^{N_{\text{train}}\times d}$ and the test representation $\tilde{X}=(x[z_J])_{z\in L^{\text{test}}}$, $\tilde{X}\in\mathbb{R}^{N_{\text{test}}\times d}$. This gives rise to the training kernel $K=XX^T$ and the train-test kernel, $\tilde{K}=\tilde{X}X^T$. The dual coefficients are given by $a=K^{-1}y$. We now consider a test set input $\tilde{x}=\tilde{x}[\tilde{z}]$ representing the underlying components $\tilde{z}$. Denoting the similarity to the training set by $k(\tilde{x},X)=X\tilde{x}\in\mathbb{R}^{N_\text{train}}$, the test set behavior is given by
    \begin{equation}
        f(\tilde{x})=a^Tk(\tilde{x},X)=\sum_{J\in\text{Conj}(\tilde{z}|Z^{\text{train}})}a^TXx_J[z_J]+\sum_{J\notin\text{Conj}(\tilde{z}|Z^{\text{train}})}a^TXx_J[z_J]=:f_1(\tilde{x})+f_2(\tilde{x}),
    \end{equation}
    where we've simply split up the sum into the conjunction-wise additive part and the remainder. Clearly, this remainder, $f_2(\tilde{x}):=\sum_{J\notin\text{Conj}(\tilde{z}|Z^{\text{train}})}a^TXx_J[z_J]$, will generally not be zero. However, we will now prove that $\mathbb{E}[f_2(\tilde{x})]=0$. As $f_1(\tilde{x})$ is conjunction-wise additive, this will prove the proposition. To do so, we note that we can assume that we have first sampled all Gaussian vectors relevant for the training set (we will denote this set of random variables by $\mathcal{X}^{\text{train}}$) and subsequently sample the set of Gaussian vectors only relevant for the test trial, i.e. $(\tilde{x}_J)_{J\notin\text{Conj}(\tilde{z}|Z^{\text{train}})}$ (we will denote this set of random variables by $\mathcal{X}^{\text{test}}$). Then,
    \begin{equation}
        \mathbb{E}[f_2(\tilde{x})]=\mathbb{E}_{\mathcal{X}^{\text{train}}}\left[\mathbb{E}_{\mathcal{X}^{\text{test}}}\left[f_2(\tilde{x})|\mathcal{X}^{\text{train}}\right]\right]=\sum_{J\notin\text{Conj}(\tilde{z}|Z^{\text{train}})}\mathbb{E}_{\mathcal{X}^{\text{train}}}\left[a^TX\mathbb{E}_{\mathcal{Z}^{\text{test}}}\left[x_J[\tilde{z}_J]|\mathcal{Z}^{\text{train}}\right]\right].
    \end{equation}
    This is zero, as $\mathbb{E}_{\mathcal{X}^{\text{test}}}\left[x_J[z_J]\right]=\mathbb{E}_{\mathcal{X}^{\text{test}}}\left[x_J[z_J]|\mathcal{X}^{\text{train}}\right]=0$, which follows from the fact that $x_J[z_J]$ for $J\notin\text{Conj}(\tilde{z}|Z^{\text{train}})$ is sampled independently from $\mathcal{X}^{\text{train}}$.
\end{proof}
}
\begin{figure}
    \centering
    \includegraphics{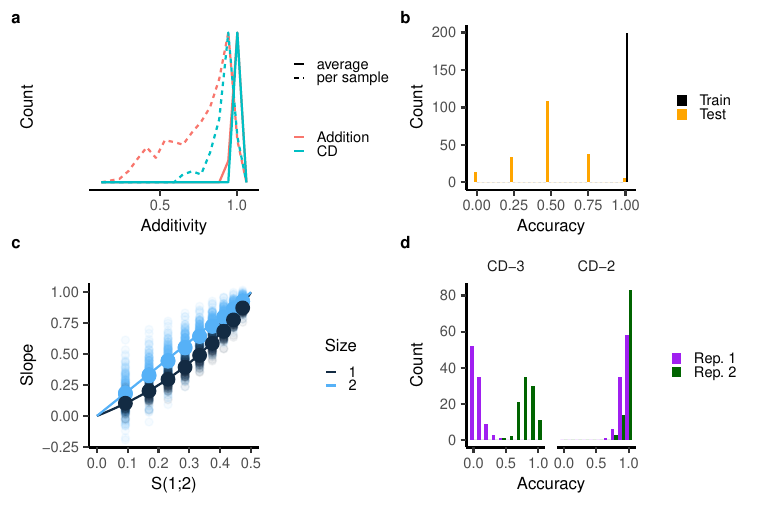}
    \caption{Analysis of randomly sampled representations as considered in \cref{prop:noncomp}. In all cases we sample 100-dimensional representations. For addition, we consider representations with weights that explore $S(1;2)\in[0.1,1]$ and for context dependence, we consider representations where $(\sigma_1,\sigma_2,\sigma_3)\in\{(0.5,1,0.1),(1,0.5,0.1)\}$, where $\sigma_k$ indicates the standard deviation of the Gaussian vector representing conjunctions of $k$ components. \textbf{a}, We compare the additivity of individual models to the additivity of the model behavior averaged across 100 randomly sampled representations. While individual models may be severely non-additive, their average behavior is consistently highly additive, empirically confirming our proposition. \textbf{b}, As a result, when these models are trained on transitive equivalence, they also systematically exhibit chance performance, as they are still, on average, constrained to be conjunction-wise additive. \textbf{c}, We then estimated the slope of the model trained on symbolic addition for each random instance of a representation against corresponding representation's $S(1;2)$. The individual slope estimates are depicted by the small translucent dots, whereas the average across all random instances is depicted by the larger points. Finally, our theoretical predictions in the exactly compositionally structured case are depicted by the lines. The average model behaviors are well described by our theory. \textbf{d}, We plot the distribution of generalization accuracies on context dependence across different model seeds, the two considered representations, and for CD-3 and CD-2. On CD-3, the representation with a higher salience for individual components (rep.\ 1, $(\sigma_1,\sigma_2,\sigma_3)=(0.5,1,0.1)$ performs systematically below chance, whereas the representation with a higher salience for conjunctions of two components (rep.\ 2, $(\sigma_1,\sigma_2,\sigma_3)=(1,0.5,0.1)$) generalizes above chance. In contrast, on CD-2, both representation exhibit better-than-chance-accuracy.}
    \label{fig:r2}
\end{figure}

\rev{
To test our theory empirically, we sample a range of Gaussian representations and train them on symbolic addition, transitive equivalence, and context dependence. For symbolic addition and transitive equivalence, we consider representations with expected $S(1;2)\in[0.1,1]$ using ten equally spaced values (i.e.\ $0.1, 0.2,\dotsc$). For context dependence, we consider two types of representation: rep. 1, where $\sigma_1=1,\sigma_2=0.5,\sigma_3=0.1$, (i.e.\ single components are most saliently represented), and rep.\ 2, where $\sigma_1=0.5,\sigma_2=1,\sigma_3=0.1$ (i.e.\ conjunctions of two components are most saliently represented). We consider representations with $d=100$ and sample 100 instances of each representations. We then fit each representation to these three tasks.
}

\rev{
We first estimate these models' additivity on symbolic addition and context dependence. We find that while many model instance are highly additive, some can be highly non-additive (\cref{fig:r2}a). However, when averaging across the model behavior 100 randomly sampled representations, this average behavior is perfectly described by a conjunction-wise additive function. This empirically confirms our proposition.}

\rev{Our proposition implies that these non-compositionally structured representations can still only systematically generalize on conjunction-wise additive tasks. To confirm this insight, we plotted the distribution of accuracies on the transitive equivalence problem across all random model instances. We found that while the models consistently learned the training set, they were indeed unable to generalize to the test set (\cref{fig:r2}b).}

\rev{Finally, we investigated whether our analysis in \cref{sec:geometry} predicted the behavior of randomly sampled representations. Importantly, we have no theoretical guarantees for this scenario. Interestingly, however, the model behaviors were still well predicted by our theory. Specifically, we estimated the slopes that best described the compositional generalization behavior on symbolic addition for different training sets (as in \cref{fig:dnns}). We found that across different model instances, these slopes were highly varied (\cref{fig:r2}c). However, the average slope across all random samples was well predicted by our analytical theory.}

\rev{Similarly, our theoretical insights on context dependence provided meaningful insight into the generalization behavior of these randomly sampled models. Specifically, on CD-3, the model more saliently representing the single components failed to generalize, whereas the model more saliently representing conjunctions of two components generalized above chance. In contrast, on CD-2, both models generalized above chance. This indicates that our insights on how prevalent shortcut biases are for different training datasets extends to randomly sampled representations.}

\section{Representational salience: a metric for compositionally structured representations}
\label{app:salience}
\subsection{Definition}
\label{app:salience-definition}
Below we formally define representational salience. To do so, we denote the similarity between two trials $z,z'$ by $\text{Sim}(O(z,z')):=K(z,z')$. Note that this is well-defined for compositionally structured representations, as $K(z,z')$ is identical for all pairs of trials with the same $O(z,z')$. We then define:
\begin{definition}
    For a generalized compositionally structured representation with kernel $K$ and a conjunction $J\subseteq\{1,\dotsc,C\}$, we recursively define
    \begin{equation}
    \label{eq:sal}
    \textstyle
        \overline{S}(\emptyset):=K(\emptyset),\quad
        \overline{S}(J):=\text{Sim}(J)-\sum_{J'\subsetneq J}\overline{S}(J'),\quad S(J)=\tfrac{\overline{S}(J)}{\sum_{\emptyset\neq J\subseteq\{1,\dotsc,C\}}\overline{S}(J)}.
    \end{equation}
    When emphasizing the total number of components, we denote $S(J;C)=S(J)$. Further, for compositionally structured representation, the similarity only depends on the total number of overlaps $|J|$. We therefore write $S(k;C)=S(J)$, where $k:=|J|$.
\end{definition}

\rev{
To illustrate how salience would be computed in practice, we consider an example representation $X\in\mathbb{R}^{n\times d}$. This representation gives rise to a kernel $K=XX^T\in\mathbb{R}^{n\times n}$. We can also understand the trial-by-trial similarity in terms of the components it is overlapping in --- denoting the set of all subsets of $\{1,\dotsc,C\}$ by $\mathcal{J}$, we denote this by $O\in\mathcal{J}^{n\times n}$. In particular, $O_{ii}=\{1,\dotsc,C\}$. For example, if data points $1$ and $2$ overlap in the third component, $O_{12}=\{3\}$. $\text{Sim}(J)$ is then defined as the entries $K_{ij}$ where $O_{ij}=J$. Note that in a compositionally structured representation, whenever $O_{ij}=J$, $K_{ij}$ takes on the same value, but more generally, we can define $\text{Sim}(J)$ by taking the average similarity. We can then define the unnormalized salience $\overline{S}$ by recursively computing it from $\text{Sim}(J)$ using \cref{eq:sal}. We then normalize the salience to get our final estimate.
}

\subsection{Why is representational salience a useful metric?}
\label{app:salience-use}
Intuitively, the representational salience captures the unique contribution of a population representing a particular conjunction $J$ (as in our informal definition in the main text). As we noted in the main text, distinguishing these unique contributions from the similarities directly would be more difficult. For example, changes in the similarity between inputs having two components in common could arise from changes in how saliently single components or conjunctions of two components are represented. To distinguish between these cases, we'd have to additionally look at the similarity between inputs having a single component in common. Our definition of representational salience avoids this issue; we therefore believe that this perspective could more broadly be useful for analyzing representations of compositional data.

\subsection{Why does representational salience leave out magnitude and baseline activity?}
\label{app:salience-parameters}
Notably, many learning models are largely invariant to a constant rescaling of their representation. For limit behavior (e.g.\ in the limit of infinite training in gradient descent), this scale has no impact. We see this, for instance, in \cref{prop:add}, where the scaling turns out to be entirely irrelevant. For regularized models, on the other hand, the scale of the representation is confounded with the strength of the regularization and so we suggest that it is again best seen as separate from the representational geometry.

The baseline activity, on the other hand, may determine how easily an intercept is learned. However, many linear readout models (e.g.\ support vector machines) do not regularize their intercept, again rendering this parameter entirely irrelevant. Notably, in the case of gradient descent, the magnitude of the baseline activity does influence how easily an intercept is learned. However, the impact of this is often negligible; for example, in \cref{prop:add}, we again find that the baseline activity is entirely irrelevant.

\subsection{Extended representational analysis}
\label{app:salience-analysis}
\begin{figure}
    \centering
    \includegraphics{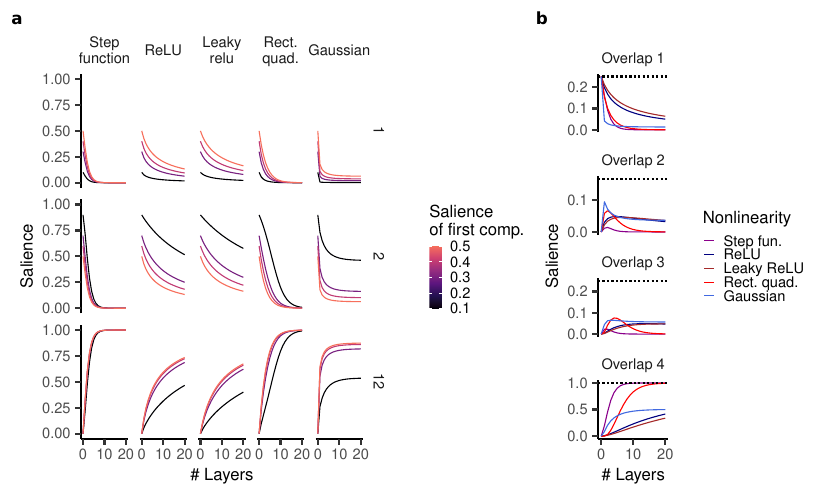}
    \caption{Extended analysis of overlap salience in random neural networks. \textbf{a}, Salience of the first component (1), the second component (2), and their conjunction (12), where we vary the two components' saliences in the input. \textbf{b}, Salience for inputs with four components.}
    \label{fig:sim}
\end{figure}
We computed the saliences by iteratively computing the representational similarities using the kernels derived in prior work \citep{cho_kernel_2009,tsuchida_invariance_2018,tsuchida_richer_2019,han_fast_2022}. In \cref{fig:sim}b, we plot the different saliences for an input with four components. Now the salience of overlap 2 and 3 both first increase and then decrease and, just like for inputs with three components (\cref{fig:salience}), a Gaussian and rectified quadratic nonlinearity yields a particularly high salience for these intermediate conjunctions. Notably, they appear to be trading off the salience of these conjunctions differently: the rectified quadratic nonlinearity more strongly emphasizes overlaps of three whereas the Gaussian nonlinearity more strongly emphasizes overlaps of two. This highlights that for larger numbers of components, the dependence of generalization behavior on specific architectural choices will likely be even stronger.

Finally, we consider an example of a generalized compositionally structured representation. Here we assume that the different input components have different magnitudes. In particular, we consider a disentangled representation whose first component has a salience between $s\in[0.1,0.5]$ and whose second component accordingly has a salience $1-s$ (\cref{fig:sim}a). As the random weight neural network becomes deeper, the more salient component (in this case component 2) increasingly dominates the representation. While the salience of the full conjunction still eventually converges to one for most nonlinearities, it takes longer to do so for a less balanced input representation. Further, for a Gaussian nonlinearity, an imbalanced representation actually decreases the limit salience the representation appears to be converging to for the full conjunction. This highlights that for disentangled representations that do not represent all components with equal magnitude, compositional generalization behavior may vary strongly with different neural network architectures.

\subsection{Salience in very deep ReLU networks}
\label{app:geometry-proof}
\begin{restatable}{proposition}{propsal}
\label{prop:sal}
For a random neural network with a (leaky) ReLU nonlinearity, as $L\to\infty$, $S(k;C)\to0$ for $k<C$ and $S(C;C)\to1$.
\end{restatable}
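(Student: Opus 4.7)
The plan is to analyze the recursion that the (leaky) ReLU dual kernel induces on cosine similarities and then extract the saliences via Möbius inversion. After rescaling representations to unit norm, compositional structure is preserved layer by layer (cf.\ \cref{prop:comp-structure}), so let $s_k(L) \in [0,1]$ denote the cosine similarity between any two representations that overlap in exactly $k$ components after $L$ layers. Crucially $s_C(L) = 1$ for every $L$ (identical inputs stay identical), while for each $k < C$ I must show that $s_k(L) \to 1$ at a rate that is universal at leading order in $L$.

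First I would establish this universal rate. For pure ReLU, the Cho--Saul arc-cosine recursion expanded near $\rho = 1$ with $\epsilon := 1 - \rho$ gives
\begin{equation}
\epsilon_{k}(L+1) = \epsilon_k(L) - \tfrac{\sqrt{2}}{\pi}\,\epsilon_k(L)^{3/2} + O\bigl(\epsilon_k(L)^{2}\bigr),
\end{equation}
and substituting $u := \epsilon^{-1/2}$ linearizes this to $u(L+1) = u(L) + \tfrac{\sqrt{2}}{2\pi} + O(u(L)^{-1})$. Hence $u(L) = \tfrac{\sqrt{2}}{2\pi}L + O(1)$ and
\begin{equation}
\epsilon_k(L) = \tfrac{2\pi^{2}}{L^{2}} + O(L^{-3}) \quad\text{for every } k < C,
\end{equation}
with the $O(L^{-3})$ remainder absorbing all dependence on the initial similarity $s_k(0) \in [0,1)$. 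For leaky ReLU with slope $\alpha \in [0,1)$, the dual kernel is a convex combination of the identity kernel and the ReLU arc-cosine kernel; the same expansion yields $\epsilon_k(L+1) = \epsilon_k(L) - c_\alpha\,\epsilon_k(L)^{3/2} + O(\epsilon_k(L)^{2})$ with some $c_\alpha > 0$, so $\epsilon_k(L) = c'_\alpha/L^{2} + O(L^{-3})$, again with leading coefficient independent of $k < C$.

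Next I would Möbius-invert the identity $s_k = \sum_{j=0}^{k}\binom{k}{j}\sigma_j$ (immediate from compositional structure and the recursive definition of $\overline{S}$ in \cref{eq:sal}) to obtain, for $k \geq 1$,
\begin{equation}
\sigma_k(L) = -\sum_{j=0}^{k}(-1)^{k-j}\binom{k}{j}\,\epsilon_j(L).
\end{equation}
For $1 \leq k < C$, the leading $L^{-2}$ contribution carries the factor $\sum_{j=0}^{k}(-1)^{k-j}\binom{k}{j} = (1-1)^{k} = 0$, so it cancels and $\sigma_k(L) = O(L^{-3})$. For $k = C$, the $j = C$ term drops out because $\epsilon_C \equiv 0$, destroying the cancellation: the alternating sum over $j < C$ evaluates to $-1$, giving $\sigma_C(L) = \tfrac{2\pi^{2}}{L^{2}} + O(L^{-3})$ (mutatis mutandis for leaky ReLU). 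The normalizer $\sum_{k=1}^{C}\binom{C}{k}\sigma_k(L)$ is therefore dominated by $\binom{C}{C}\sigma_C(L)$, yielding $S(C;C) \to 1$ and $S(k;C) = O(1/L) \to 0$ for $1 \leq k < C$.

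The main obstacle is the universality-of-rate step: showing that the leading $L^{-2}$ coefficient of $\epsilon_k(L)$ is the \emph{same} for every $k < C$ regardless of $s_k(0)$, which is what makes the binomial cancellation effective. This is where the substitution $u = \epsilon^{-1/2}$ does the real work, pushing all initial-condition dependence into $u(0) = O(1)$ and hence into an $O(L^{-3})$ correction to $\epsilon_k(L)$. Once this universality is in place the remainder of the argument is purely algebraic, and the extension to leaky ReLU follows from applying the same expansion to its dual kernel.
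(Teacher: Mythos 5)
Your argument is sound and reaches the right conclusion, but it takes a genuinely different route for the key analytic step. The paper works with the normalized relative similarity $s^{(L)}=(\kappa^{(L)}-\kappa_d^{(L)})/(\kappa_i^{(L)}-\kappa_d^{(L)})$ --- which in your notation is $1-\epsilon_k(L)/\epsilon_0(L)$ --- and shows it contracts to the fixed point $0$ by bounding the partial derivative of the one-step map below $1$ (with the last inequality checked by inspection of a plot). You instead expand the normalized arc-cosine recursion near its fixed point, use the substitution $u=\epsilon^{-1/2}$ to show that every $\epsilon_k(L)$ with $k<C$ has the \emph{same} universal leading asymptotics $a/L^2$ regardless of its initial condition, and then make explicit via binomial (M\"obius) inversion why this forces all partial-conjunction saliences to vanish relative to the normalizer $\epsilon_0(L)\sim a/L^2$ --- a step the paper leaves implicit in the sentence ``this implies directly that the salience of all partial conjunctions converges to zero.'' Your route is more quantitative (it yields an explicit $O(\log L/L)$ decay rate for $S(k;C)$, $k<C$) and arguably more self-contained than the paper's contraction argument; the paper's route avoids any asymptotic expansion and handles the transient regime (initial similarities far from $1$) more directly, whereas you should say a word about iterating into the neighborhood where the expansion is valid before the universality kicks in.

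Two small corrections that do not affect the structure: the expansion of the ReLU dual kernel near $u=1$ gives $\tilde{k}(1-\epsilon)=1-\epsilon+\tfrac{2\sqrt{2}}{3\pi}\epsilon^{3/2}+O(\epsilon^2)$, not coefficient $\tfrac{\sqrt{2}}{\pi}$, so the universal constant is $\tfrac{9\pi^2}{2}$ rather than $2\pi^2$; and the accumulated $O(1/u(l))$ corrections sum to $O(\log L)$, so the remainder in $\epsilon_k(L)$ is $O(L^{-3}\log L)$ rather than $O(L^{-3})$ --- still $o(L^{-2})$, which is all the cancellation needs.
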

\begin{proof}
    Note that the proof is a minor extension of Lemma S1.3 in \cite{lippl_mathematical_2024}. We present it here in a self-contained manner. We consider a nonlinearity
    \begin{equation}
        \sigma(u):=A\min(u,0)+\max(u,0),\quad A\in[0,1).
    \end{equation}
    By prior work \citep{cho_kernel_2009,tsuchida_invariance_2018,tsuchida_richer_2019,han_fast_2022},
    \begin{equation}
        \mathbb{E}_{w}\left[\phi({w}^T{h}^{(l)}(z))\phi({w}^T{h}^{(l)}(z'))\right]=\sigma^2\|h^{(l)}(z)\|_2\|h^{(l)}(z')\|_2k\left(\hat{h}^{(l)}(z)^T\hat{h}^{(l)}(z')\right),
        \label{eq:kernel}
    \end{equation}
    where
\begin{equation}
    k(u)=\frac{(1-A)^2}{2\pi}\left(\sqrt{1-u^2}+(\pi-\cos^{-1}(u))u\right)+Au,
\end{equation}
$\sigma^2$ is the variance of the sampled weights, and $\hat{h}=h/\|\hat{h}\|_2$.

This means that for any two inputs that have a certain similarity $u$, their similarity in the $L$-th layer is given by $k^{(L)}(u)$, where $k^{(L)}$ denotes the $L$-times application of $k$.
Let distinct trials in the input have a similarity of $\kappa_d$ and let identical trials have a similarity of $\kappa_i$. Any set of trials with overlapping components will have a similarity $\kappa$, $\kappa_d<\kappa<\kappa_i$. We denote their corresponding similarity in the $L$-th layer by $\kappa_i^{(L)},\kappa_d^{(L)},\kappa^{(L)}$. Our goal is now to show that
\begin{equation}
    \lim_{L\to\infty}s^{(L)}=0,\quad s^{(L)}:=\frac{\kappa^{(L)}-\kappa^{(L)}_d}{\kappa^{(L)}_i-\kappa^{(L)}_d}=0.
\end{equation}
This implies directly that the salience of all partial conjunctions converges to zero, which in turn implies that the salience of the full conjunction converges to one.

(\ref{eq:kernel}) implies that $\kappa_i^{(L+1)}=\sigma^2\kappa_i^{(L)}k(1)=\sigma^2\kappa_i^{(L)}\tfrac{1+A^2}{2}$. Notably, all inputs have the same magnitude and therefore have the same magnitude through all layers; this is given by $\sqrt{\kappa_i^{(L)}}$. We can therefore denote
\begin{equation}
    \kappa^{(L+1)}=\sigma^2\kappa_i^{(L)}k\left(\kappa^{(L)}/\kappa_i^{(L)}\right).
\end{equation}
Thus,
\begin{equation}
    s^{(L+1)}=\frac{k(\kappa^{(L)}/\kappa_i^{(L)})-k(\kappa_d^{(L)}/\kappa_i^{(L)})}{k(1)-k(\kappa_d^{(L)}/\kappa_i^{(L)})}
\end{equation}
We thus define new normalized variables $\hat{\kappa}^{(L)}:=\kappa^{(L)}/\kappa_i^{(L)},\hat{\kappa}_d^{(L)}:=\kappa_d^{(L)}/\kappa_i^{(L)}$, i.e.
\begin{equation}
    s^{(L)}=\frac{\hat{\kappa}^{(L)}-\hat{\kappa}^{(L)}_d}{1-\hat{\kappa}_d^{(L)}},
\end{equation}
and therefore
\begin{equation}
    \hat{\kappa}^{(L)}=(1-\hat{\kappa}_d^{(L)})s^{(L)}+\hat{\kappa}_d^{(L)}.
\end{equation}
Note that $\hat{\kappa}^{(L+1)}=k(\hat{\kappa}^{(L)})/k(1)$ and $\hat{\kappa}_d^{(L+1)}=k^(\hat{\kappa}^{(L)}_d)/k(1)$. We thus define
\begin{equation}
    \tilde{k}(u):=\frac{k(u)}{k(1)}=u+\frac{\rho}{\pi}(\sqrt{1-u^2}-\cos^{-1}(u)u),\quad
    \rho:=\frac{(1-A)^2}{(1+A)^2}.
\end{equation}
Note that
\begin{equation}
    \tilde{k}'(u)=1+\frac{\rho}{\pi}\left(-\frac{u}{\sqrt{1-u^2}}-\cos^{-1}(u)+\frac{u}{\sqrt{1-u^2}}\right)=1-\frac{\rho}{\pi}\cos^{-1}(u).
\end{equation}
Note that $k(1)=1$ and as for all $0\leq u<1$, $\tilde{k}'(u)<1$, this is the only fixed point and $\hat{\kappa}^{(L)},\hat{\kappa}_d^{(L)}\to\infty$.
We can therefore define
\begin{equation}
    s^{(L+1)}=\frac{\tilde{k}\left((1-\hat{\kappa}_d^{(L)})s^{(L)}+\hat{\kappa}_d^{(L)}\right)-\hat{\kappa}_d^{(L)}}{1-\kappa_d^{(L)}}.
\end{equation}
We now determine the fixed mapping to this mapping assuming that $\hat{\kappa}_d^{(L)}$ is fixed at some value $d$, i.e.:
\begin{align}
\begin{split}
    f(s,d)&=\frac{\tilde{k}((1-d)s+d)-\tilde{k}(d)}{1-\tilde{k}(d)}.
\end{split}
\end{align}
$s=0$ is a fixed point. Further,
\begin{equation}
    \frac{\partial f(s,d)}{\partial s}=\frac{(1-d)\tilde{k}'((1-d)s+d)}{1-\tilde{k}(d)}=\frac{(1-d)(1-\tfrac{\rho}{\pi}\cos^{-1}((1-d)s+d)}{1-d-\tfrac{\rho}{\pi}(\sqrt{1-d^2}-\cos^{-1}(d)d)}.
\end{equation}
We now prove that this for $0<s\leq\tfrac{1}{2}$, this derivative is smaller than 1. Specifically,
\begin{align}
    \begin{split}
        &(1-d)(1-\tfrac{\rho}{\pi}\cos^{-1}((1-d)s+d)=\\&1-d-\tfrac{\rho}{\pi}(\sqrt{1-d^2}-\cos^{-1}(d)d)+\tfrac{\rho}{\pi}r(s,d),
    \end{split}
\end{align}
where the residual is given by
\begin{align}
\begin{split}
    r(s,d):=(d-1)\cos^{-1}((1-d)s+d)+\sqrt{1-d^2}-d\cos^{-1}d.
\end{split}
\end{align}
We now need to prove that $r(s,d)<0$. Note that $r(s,d)$ is monotonically increasing in $s$ and therefore
\begin{align}
\begin{split}
    r(s,d)\leq r(\tfrac{1}{2},d)&=(d-1)\cos^{-1}(\tfrac{1}{2}+\tfrac{1}{2}d)+\sqrt{1-d^2}-d\cos^{-1}d<0,
\end{split}
\end{align}
where we infer the latter inequality by visual inspection of the plot of this function.
\end{proof}
\section{DSprites representations}
\label{app:dsprites}
\rev{To investigate whether our theory can describe compositional generalization in practically used disentangled models, we considered six different model architectures considered in \cite{locatello_challenging_2019} and trained on the DSprites dataset, an important benchmark for disentangled representation learning. This paper considers fifty random seeds for each model and further considers six different possible hyperparameters per architectures, resulting in a total of 1,800 models. In our analysis, we only analyze differences between architectures, pooling across all hyperparameter choices.}
\subsection{Task setup}
\rev{DSprites consists of small black-and-white shapes and has five different underlying components: shape (three categories: hearts, ovals, and squares), size, x-position, y-position, and rotation. We consistently consider the largest possible size and hold the rotation fixed at zero. While x- and y-position can each take on 32 possible values, we only consider four different categories for each. Importantly, this is still a highly non-compositionally structured representation: not only do the disentangled representation learning methods often fail to discover the underlying factors of variation; the model also likely represents x- and y-positions that are closer to each other, as more similar, violating our compositionally structured assumption. These representations therefore present a particularly challenging test of our framework.}

\rev{For symbolic addition and transitive equivalence, we consider x- and y-position as the two components. For each task instance, we randomly determine which position takes on which component's role. In total, we consider 50 randomly sampled task instance for each of the 1,800 models. For context dependence, we subsample two shapes and assume that these shapes provide the context cue. We then consider x- and y-position as feature 1 and feature 2. While the context dependence considered in the main text had six possible feature values for each feature, we now only consider four. As a result, we only consider two different datasets: CD-2, which leaves out all trials where feature 1 indicates category 2 and feature 2 indicates category 1; and CD-1 which only leaves out one conjunction of features. Put differently, on a given trial, the task could for example look as follows: if we see a heart shape, the model should output whether this object is in a certain x-position; if we see a square shape, the model should output whether this object is in a certain y-position. While the model has seen all x- and y-positions individually, it has not seen each combination of x- and y-positions.}

\rev{Finally, we consider either a direct linear readout from the disentangled representation or an initial transformation by a one-hidden-layer ReLU neural network with random weights.}
\begin{figure}
    \centering
    \includegraphics[width=\textwidth]{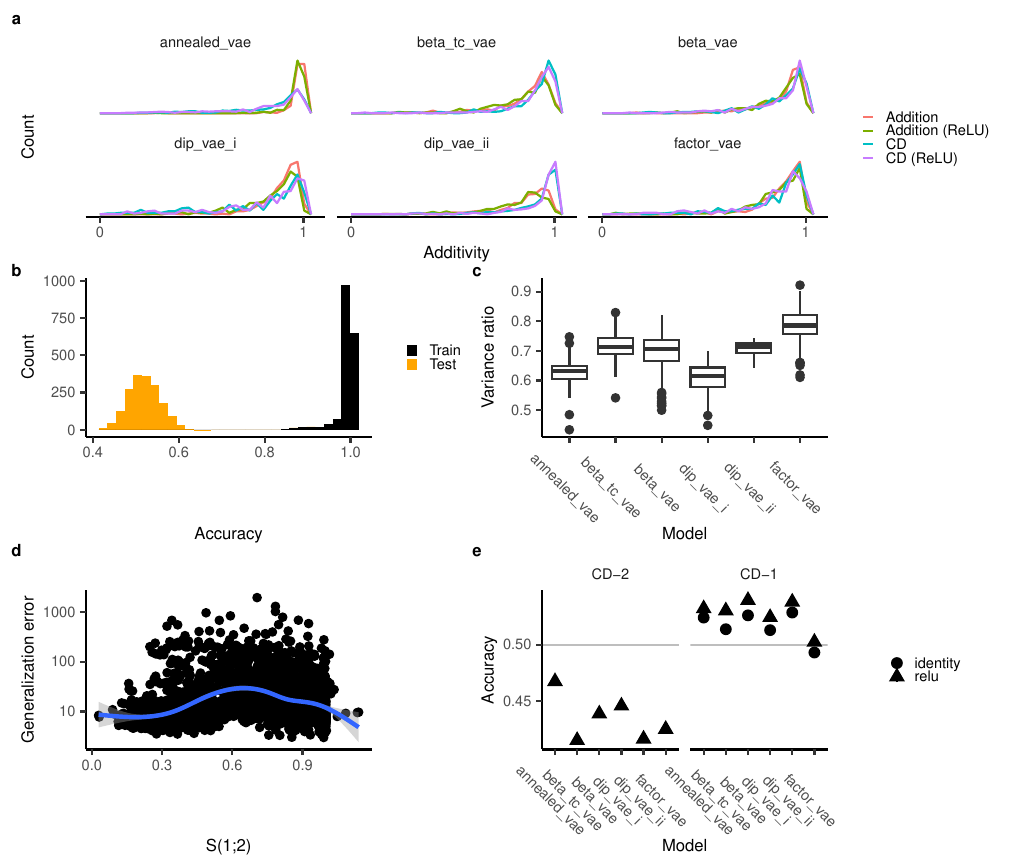}
    \caption{Conjunction-wise additivity in empirical disentangled models. \textbf{a}, Additivity across the different model architectures and four different task settings. All of the model behaviors are generally highly additive. \textbf{b}, Distribution of accuracies on transitive equivalence when averaged across different component roles. \textbf{c}, Variance ratio, capturing the ratio between the deviation between the representational similarity matrix and its compositionally structured approximation and the general variance of the representational similarity matrix. The numbers indicate that all disentangled models discover some compositional structure in their representation, but still deviate substantially from a compositionally structured representation. \textbf{d}, Generalization error on symbolic addition plotted against the estimated $S(1;2)$. \textbf{e}, Average accuracy on two versions of context dependence with a total of four features: CD-2 which now leaves out an entire orthant, and CD-1, which only leaves out a single data point. Again, models generally perform below chance on CD-2 but above chance on CD-1, though performance is generally really low.}
    \label{fig:r1}
\end{figure}
\subsection{The models are well described by a conjunction-wise additive computation}
\rev{We first investigated whether the model predictions on the test set were well characterized by a conjunction-wise additive computation (see \cref{app:additivity}), averaging this model behavior across the fifty random task instances. We found that they were generally well captured, though they were certainly not perfectly conjunction-wise additive (\cref{fig:r1}a). Further, none of these models were able to systematically generalize on transitive equivalence (\cref{fig:r1}b). This indicates that conjunction-wise additivity may characterize the generalization class of these highly non-compositionally structured representations as well --- at least when averaged across task instances.}

\subsection{The representations are partially compositionally structured}
\rev{We next investigated whether these representations exhibit compositional structure according to our definition. To determine this, we computed their representational similarity matrix and computed the average similarity for each set of overlaps. This instantiates the compositionally structured representational similarity matrix that most closely described the empirical representational similarity. We then determined the average squared deviation from this matrix, $\sigma_{cs}^2$, comparing it to the overall variance of this matrix, $\sigma_{var}^2$. Overall, the \emph{variance ratio}, $\sigma_{cs}^2/\sigma_{var}^2$, characterizes the degree to which the given representation is compositionally structured. In a fully non-compositionally structured representation, the variance ratio will be roughly one, whereas in a fully compositionally structured representation, it will be zero. For the disentangled representation learning models, we found that this variance ratio took on value between 0.5 and 0.9 (\cref{fig:r1}c). Thus, the models' representation partially captured compositional structure, but also contained a lot of non-compositional structure.}

\subsection{The non-compositional representational structure substantially impacts model behavior}
\rev{Finally, we tested whether our representational geometry analysis in \cref{sec:geometry} extended to the DSprites representations. We first estimated their salience $S(1;2)$ by computing the average representational similarity for each overlap and then computing the salience using \cref{eq:sal}. \cref{prop:add} would predict that the generalization error should decrease with increasing $S(1;2)$. However, we do not observe such a trend (\cref{fig:r1}d). This indicates that our representational geometry analysis does not apply to the DSprites representations, due to the way in which they violate the compositional structure assumption.}

\rev{Finally, we determined generalization accuracy on CD-2 and CD-1. We found that most models performed quite poorly, perhaps owing to the fact that shape is barely represented in these models. Nevertheless, we found that most models performed systematically below chance for CD-2 but systematically above chance for CD-1. Our analysis of context dependence in compositionally structured representations suggests that this may be because these models exploit a context-driven shortcut on CD-2 but not CD-1.}

\rev{Overall, our analysis therefore paints a nuanced picture of the applicability of our theory to practical disentangled representations. Our findings suggest that these models may still be well approximated by conjunction-wise additive computations and that this generalization class may therefore shed light on fundamental limits to the generalization of linear readout models. At the same time, to understand how these models generalize on conjunction-wise additive tasks, taking into account the specific ways in which they violate the compositional structure assumption is likely important.}
\section{Detailed methods}
\label{app:methods}
\subsection{Models}
\paragraph{Kernel model.}
We fit the kernel models by hand-specifying the kernel and fitting either a support vector regression or classification using \texttt{scikit-learn} \citep{pedregosa_scikit-learn_2011}. Note that this is equivalent to using a feature basis with the same resulting similarities. However, hand-specifying these similarities enabled us to easily explore the full range of possible representations.
\paragraph{Rich and lazy ReLU networks.}
All networks were trained with Pytorch and Pytorch Lightning \cite{paszke_pytorch_2019}.
We consider ReLU networks with one hidden layer and $H=1000$ units. We initialize by $\sigma\sqrt{2/H}$, considering $\sigma\in[10^{-6},1]$. In particular, when reporting results on rich networks (without further specification), we assume $\sigma=10^{-6}$. When reporting results on lazy network, we assume $\sigma=1$.
\rev{\paragraph{Compositional MNIST/CIFAR-10.}
We create a compositional version of MNIST and CIFAR-10 by concatenating multiple images along different channels. For example, the input to the MNIST version of the symbolic addition task had two channels, each containing one digit whereas the CIFAR-10 version of the symbolic addition task had six channels, three for each digit. We further varied the distance between the concatenated images either presenting them all on top of each other or presenting them offset by a certain number of pixels. Each image category corresponded to a certain component, where its role as ranodmly sampled for each task instance. The output was still given by a single scalar: the total magnitude for symbolic addition and the two categories for context dependence and transitive equivalence.}
\paragraph{Convolutional neural networks.}
We considered networks with four convolutional layers (kernel size is five, two layers have 32 filters, two have 64 filters) and two densely connected layers (with 512 and 1024 units). Each layer is followed by a ReLU nonlinearity, and the convolutional stage is followed by a max pooling operation. All weights are initialized with He initialization \citep{he_delving_2015}. \rev{We trained these networks with SGD using a learning rate of $10^{-4}$ and momentum of $0.9$.}
\paragraph{Residual neural networks.}
We trained a residual neural network with eight blocks in total, two with 16, 32, 64, and 128 channels, respectively, using the Adam optimizer with a learning rate of $10^{-3}$ for 100 epochs.
\paragraph{Vision Transformers.}
Finally, we trained a Vision Transformer (ViT) with six attention heads, 256 dimensions for both the attention layer and the MLP, and a depth of four, using Adam with a learning rate of $10^{-4}$ for 200 epochs.
\paragraph{Data augmentation.}
We did not use data augmentation for MNIST. For CIFAR-10, we used a random flip and a random crop.
\subsection{Reproducibility}
\label{app:reproducibility}
The code required to reproduce all experiments can be found under \url{https://github.com/sflippl/compositional-generalization}.
\subsection{Additivity analysis}
\label{app:additivity}
To analyze how well a conjunction-wise additive computation can describe network behavior, we considered as the set of possible features a concatenation of one-hot vectors coding for each possible conjunction. We then removed all features that are constant at zero on the training dataset and used linear regression to try and predict network behavior on both training and test set for all remaining features. The resulting $R^2$ defines the ``additivity'' of the network behavior (i.e. $R^2=1$ indicates full conjunction-wise additivity). Furthermore, we can use the inferred values assigned to these different conjunctions to compare kernel models, feature-learning networks, and vision networks. Note that for the vision networks, we first average the model predictions across all different images instantiating a given compositional input.

Notably, our theory predicts that the constrained conjunction-wise additive function can predict behavior on the test set, but not necessarily on the training set. To test this prediction in vision network, we compared the additivity of the test set alone to the additivity of the entire dataset (\cref{fig:supp-additivity}). First, we found that the predictivity on the test set was indeed generally very high. Second, we found that it was substantially higher than on the training and test set taken together.
\begin{figure}
    \centering
    \includegraphics{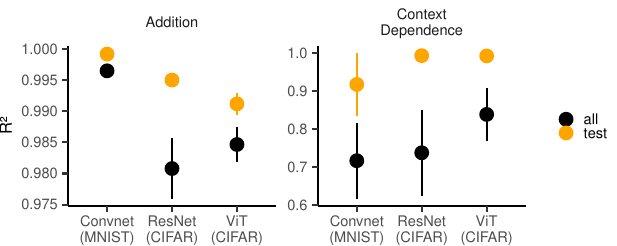}
    \caption{$R^2$ of the conjunction-wise additive predictor on the test set as well as on both the training and test set.}
    \label{fig:supp-additivity}
\end{figure}
\section{Compositional task space}
\label{app:tasks}
\subsection{Symbolic addition}
\label{app:addition}
\subsubsection{Proof of \cref{prop:add}}
\label{app:addition-proof}
\propadd*
\begin{proof}
We split up the training data into
\begin{equation}
    \mathcal{I}^{(1)}:=\{[w]|w\in\mathcal{W}\}^2,
\end{equation}
and
\begin{align}
        \mathcal{I}^{(2)}&:=\bigcup_{w\in\mathcal{W}}\mathcal{I}^{(1,w)}\cup\mathcal{I}^{(2,w)},\\
        \mathcal{I}^{(1,w)}&:=\left\{([w],[v])|v\in\mathcal{V}\setminus\mathcal{W}\right\},\quad
        \mathcal{I}^{(2,w)}:=\left\{([v],[w])|v\in\mathcal{V}\setminus\mathcal{W}\right\}.
    \end{align}
We denote the dual coefficient associated with each training point $([i],[j])$ by $a_{ij}\in\mathbb{R}$. Note that the problem is symmetric and therefore we know that $a_{ij}=a_{ji}$. We define a few summed coefficients:
\begin{align}
    b_v:=\sum_{w\in\mathcal{W}}a_{vw},\quad\overline{b}_w:=\sum_{v\in\mathcal{V}\setminus\mathcal{W}}a_{vw},\quad
    c_w:=\sum_{w'\in\mathcal{W}}a_{ww'},\\
    b:=\sum_{v\in\mathcal{V}\setminus\mathcal{W}}b_v=\sum_{w\in\mathcal{W}}b_w,\quad
    c:=\sum_{w\in\mathcal{W}}c_w.
\end{align}
Note that the sum over all dual coefficients is given by $2b+c$. Let $p:=|\mathcal{W}|$ and $q:=|\mathcal{V}|-|\mathcal{W}|$. We denote by $\kappa_c$ the similarity between inputs overlapping in $c$ components. Then, setting $\delta_2:=\kappa_2-\kappa_0$ and $\delta_1:=\kappa_1-\kappa_0$, the set of dual equations is given by
\begin{align}
        ([w],[w'])\in\mathcal{I}^{(1)}: \kappa_0(2b+c)+\delta_1(\overline{b}_{w}+\overline{b}_{w'}+c_{w}+c_{w'})+(\delta_2-2\delta_1)a_{ww'}&=w+w',\label{eq:add-dual-1}\\
        ([w],[v])\in\mathcal{I}^{(1,w)}:\kappa_0(2b+c)+\delta_1(b_v+\overline{b}_w+c_w)+(\delta_2-2\delta_1) a_{wv}&=w+v.\label{eq:add-dual-2}
\end{align}
(Note that the equation corresponding to $([v],[w])$ is equivalent due to the problem's symmetry.)

The prediction is given by
\begin{equation}
    f([v_1],[v_2])=\kappa_0(2b+c)+\delta_1(b_{v_1}+b_{v_2}).
\end{equation}
We now sum (\ref{eq:add-dual-2}) over $w$ (setting $\overline{w}:=\sum_{w\in\mathcal{W}}w$):
\begin{align}
\begin{split}
    \overline{w}+pv&=p\kappa_0(2b+c)+p\delta_1b_v+\delta_1(b+c)+(\delta_2-2\delta_1)b_v\\
    &=((p-2)\delta_1+\delta_2)b_v+(2p\kappa_0+\delta_1)b+(p\kappa_0+\delta_1)c
    \label{eq:add-dual-3}
\end{split}
\end{align}
Thus,
\begin{equation}
    \delta_1b_v=\frac{\delta_1p}{(p-2)\delta_1+\delta_2}v+\delta_1(\overline{w}-(2p\kappa_0+\delta_1)b-(p\kappa_0+\delta_1)c).
\end{equation}
Thus, setting
\begin{equation}
    m:=\frac{\delta_1p}{(p-2)\delta_1+\delta_2},\quad
    d:=2\delta_1(\overline{w}-(2p\kappa_0+\delta_1)b-(p\kappa_0+\delta_1)c)+\kappa_0(2b+c),
\end{equation}
we can write
\begin{equation}
    f([v_1],[v_2])=m(v_1+v_2)+d.
\end{equation}
To simplify $d$, we sum (\ref{eq:add-dual-1}) over all $w,w'$:
\begin{align}
\begin{split}
2p\overline{w}&=p^2\kappa_0(2b+c)+2p\delta_1(b+c)+(\delta_2-2\delta_1)c\\
&=2p(p\kappa_0+\delta_1)b+(p^2\kappa_0+2(p-1)\delta_1+\delta_2)c.
\end{split}
\end{align}
We further sum (\ref{eq:add-dual-3}) over all $v$, setting $\overline{v}:=\sum_{v\in\mathcal{V}\setminus\mathcal{W}}v$:
\begin{equation}
    ((p+q-2)\delta_1+\delta_2+2pq\kappa_0)b+q(p\kappa_0+\delta_1)c=q\overline{w}+p\overline{v}.
\end{equation}
We can now compute $b$ and $c$ from this system of equations and plug it into $d$.

Finally, $\overline{w}=\overline{v}=0$ immediately implies that $b=c=0$ and therefore $d=0$.
\end{proof}
\subsubsection{Behavior of deep ReLU networks}
Next, we examined the behavior of deep ReLU networks with varying depth (\cref{fig:supp-deep}). On the extrapolation task ($\mathcal{W}=\{0\}$), these networks also had compressed model predictions on the compositional split that were approximately linearly distorted (though for deeper networks, this relationship seemed to have a slight S-shape). Intriguingly, these networks perfectly generalized on the interpolation task ($\mathcal{W}=\{-4,4\}$). For $\mathcal{W}=\{-2,2\}$, they perfectly generalized on the interpolation portion, but not the extrapolation portion.
\begin{figure}
    \centering
    \includegraphics{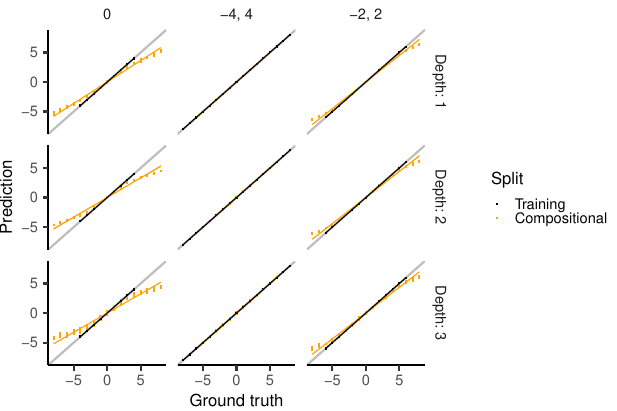}
    \caption{Model prediction plotted against ground truth for different training sets and depths.}
    \label{fig:supp-deep}
\end{figure}
\subsubsection{Behavior of vision models}
\label{app:addition-vision}
We trained the Convnets on MNIST using seven different training sets $\mathcal{W}$: $\{[0]\}$, $\{[-4],[4]\}$, $\{[-2],[2]\}$, $\{[-4],[0],[4]\}$, $\{[-1],[0],[1]\}$, $\{[-4],[-3],[3],[4]\}$, $\{[-2],[-1],[1],[2]\}$. This allowed us to vary both the size of the training set and whether it involved only interpolation or also extrapolation. We trained these networks for 100 epochs on 20,000 samples. We trained the ResNets and ViTs on CIFAR-10 using the first three training sets listed above. We trained the ResNets for 100 epochs and the ViTs for 200 epochs, training all networks on 40,000 samples. Our findings on these experiments are summarized in \cref{sec:dnns}. Further, \cref{fig:supp-slopes} depicts, for each training set, the average prediction for each combination of components. We can see that the ConvNet predictions are highly linearly correlated with the ground truth. The ResNet and ViT predictions are also linearly correlated, but exhibit a slightly noisier relationship.
\begin{figure}
    \centering
    \includegraphics{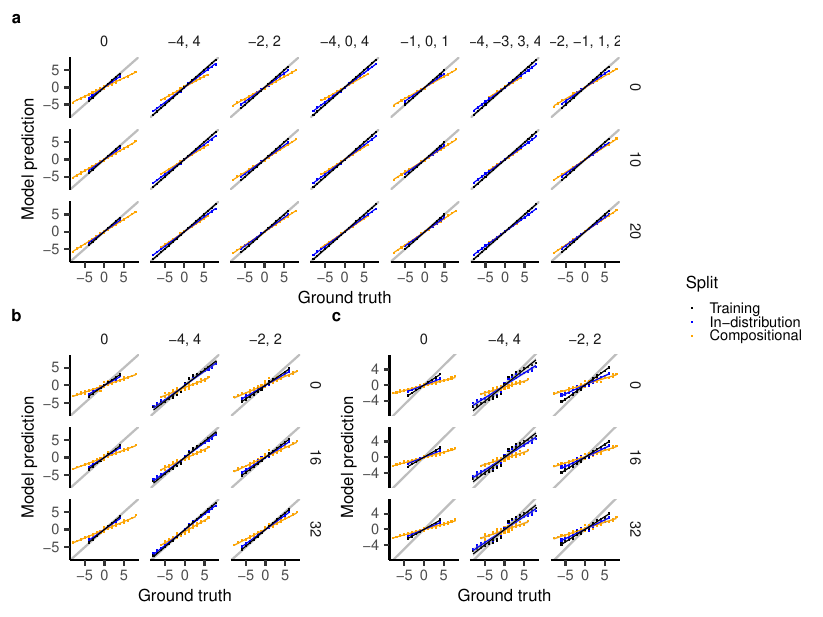}
    \caption{Average prediction for each combination of components plotted against the ground truth for \textbf{a}, ConvNets trained on MNIST, \textbf{b}, ResNets trained on CIFAR-10, and \textbf{c}, ViTs trained on CIFAR-10.}
    \label{fig:supp-slopes}
\end{figure}

Additionally, we also plot the mean squared error of the networks on the different training sets (\cref{fig:supp-mse}). This provides a more conventional (but harder to interpret) measure of performance compared to the slope depicted in \cref{fig:dnns}. We again see that the mean squared error on the compositional generalization set is decreasing with increasing distance (\cref{fig:supp-mse}a). Further, it is decreasing with increasing training set size (\cref{fig:supp-mse}), though we note that the differences in the scales of the generalization set for different sets $\mathcal{W}$ renders it harder to directly to compare these values. In contrast, the slopes described in the main text are more easily comparable.
\begin{figure}
    \centering
    \includegraphics{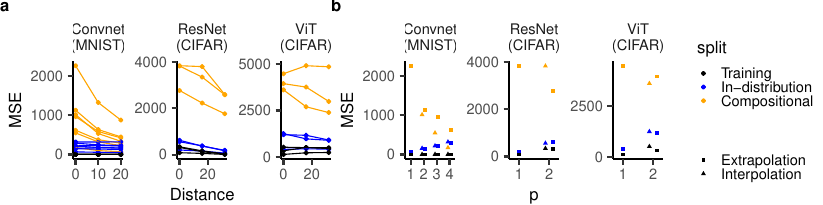}
    \caption{\textbf{a}, Mean squared error as a function of distance. \textbf{b}, Mean squared error as a function of training set size $p$ (different dots corresponding to interpolation versus extrapolation.}
    \label{fig:supp-mse}
\end{figure}

\rev{
Overall, our analysis indicates that our theory provides several valuable qualitative insights into the behavior of deep neural network models. Importantly, however, we are not able to provide quantitative bounds at the moment. To demonstrate this, we computed the salience $S(1;2)$ of the neural tangent kernel of the convolutional neural networks before and after being trained on the MNIST version of symbolic addition with $\mathcal{W}=\{0\}$ (\cref{fig:r3}a). Importantly, $S(1;2)$ changed substantially during training, indicating that these networks are not trained in the kernel regime. As such, our theory does not apply exactly. To see whether it provided appropriate quantitative bounds, we then estimated the slope of each model instance on the compositional generalization dataset and plotted it against $S(1;2)$ of the neural tangent kernel at the beginning of training. We found that higher $S(1;2)$ generally resulted in a larger slope, as predicted by our theory. However, the relationship between $S(1;2)$ and slope did not adhere exactly to the quantitative predictions made by our theory. This indicates that while our theory can shed light on certain qualitative behaviors, it still leaves certain generalization behaviors in deep neural networks unclear.
}
\begin{figure}
    \centering
    \includegraphics{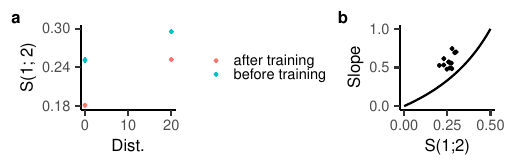}
    \caption{\textbf{a}, $S(1;2)$ in the convolutional neural network's neural tangent kernel, before and after being trained on symbolic addition with $\mathcal{W}=\{0\}$. \textbf{b}, The estimated slope for each model instance plotted against $S(1;2)$. While increased $S(1;2)$ generally yields a larger slope (as predicted by our theory), the relationship between $S(1;2)$ and the slope does not follow the exact quantitative relationship predicted by our theory.}
    \label{fig:r3}
\end{figure}

\rev{
Finally, our analysis in \cref{sec:salience} suggests that deeper models should yield more conjunctive representations, and therefore exhibit worse generalization on symbolic addition. To test this prediction, we varied the number of fully connected layers in the convolutional neural network and trained these networks on the symbolic addition task with the training set $\mathcal{W}=\{0\}$. We found that deeper networks indeed generalized worse (\cref{fig:r4}).
}
\begin{figure}
    \centering
    \includegraphics{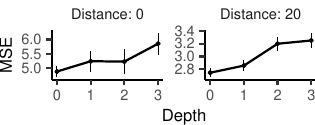}
    \caption{Mean squared error on the compositional generalization set for different depths of the fully connected network at the end of the convolutional neural network architecture.}
    \label{fig:r4}
\end{figure}

\subsection{Context dependence}
\label{app:cdm}
\subsubsection{General task definition}
We consider inputs with three components, $(z_{\text{co}},z_{f1},z_{f2})$. We assume that $z_\text{co}\in C_1\cup C_2$, where $C_1$ is the set of possible contexts under which $z_{f1}$ is relevant and $C_2$ is the set of possible contexts under which $z_{f2}$ is relevant. We further assume that there are decision functions $d_1(z_{f1}),d_2(z_{f2})\in\mathbb{R}$. (For example, in the example in the main text, these function map three features to the first category (i.e.\ $y=-1$) and three features to the second category (i.e.\ $y=1$).) The target is then given by
\begin{equation}
    y(z_{\text{co}},z_{f1},z_{f2})=\begin{cases}
        d_1(z_{f1})&\text{ if }z_\text{co}\in C_1,\\
        d_2(z_{f2})&\text{ if }z_\text{co}\in C_2.
    \end{cases}
\end{equation}
Note that in the main text, we consider $C_1=\{1\}$, $C_2=\{2\}$, and six possible values for $z_{f1},z_{f2}$, where the decision function maps three onto $1$ and three onto $-1$.
\subsubsection{Novel stimulus compositions are conjunction-wise additive}
If the test set consists in novel combinations of stimuli, this is a conjunction-wise additive computation. 
Namely, suppose that for all test inputs $(z_{\text{co}},z_{f1},z_{f2})$, the two features have never been observed in conjunction, but both $(z_{\text{co}},z_{f1})$ and $(z_{\text{co}},z_{f2})$ have been. (This includes the case considered in the main text.) In this case, we can define functions $f_{12}$ and $f_{13}$ to implement the appropriate mapping:
\begin{align}
    f_{12}(z_\text{co},z_{f1})&:=\begin{cases}
        d_1(z_{f1})&\text{ if }z_\text{co}\in C_1,\\
        0&\text{ if }z_\text{co}\in C_2,
    \end{cases}\quad
    f_{13}(z_\text{co},z_{f2}):=\begin{cases}
        0&\text{ if }z_\text{co}\in C_1,\\
        d_2(z_{f2})&\text{ if }z_\text{co}\in C_2,
    \end{cases}
    \\
    f(z_\text{co},z_{f1},z_{f2})&=f_{12}(z_\text{co},z_{f1})+f_{13}(z_\text{co},z_{f2}).
\end{align}

\subsubsection{Novel rule compositions are not conjunction-wise additive}
\label{app:rule}
We could also imagine an alternative generalization rule in a task where there are multiple components indicating the same context: $C_1=\{\text{co}_1,\text{co}_2\}$ and $C_2=\{\text{co}_3,\text{co}_4\}$. We then leave out certain features with certain contexts. For example, suppose we had never seen two values for $z_{f1}$ and $z_{f2}$ in conjunction with $z_\text{co}\in\{\text{co}_2,\text{co}_4\}$. In principle, if the model understood that $z_\text{co}=\text{co}_1,\text{co}_2$ (and $z_\text{co}=\text{co}_3,\text{co}4$ resp.) signify the same context (i.e.\ learned to abstract the context from the context cue), it could generalize successfully as it had observed these features in conjunction with $z_\text{co}=\text{co}_1,\text{co}_3$. However, the conjunction-wise additive mapping depends on having observed each context in conjunction with each feature and this task is therefore non-additive.

\subsubsection{Coefficient groups}
\label{app:cdm-conj}
In \cref{fig:mem}d, we grouped the inferred coefficients into categories. We here explain these categories:
\begin{itemize}
    \item Right conj.: This is the correct conjunction the model should use to solve the task, i.e. between $z_\text{co}=\text{co}_1$ and $z_{f1}$ and between $z_\text{co}=\text{co}_2$ and $z_{f2}$.
    \item Wrong conj.: This is the incorrect conjunction between context and feature, i.e. between $z_\text{co}=\text{co}_1$ and $z_{f2}$ and between $z_\text{co}=\text{co}_2$ and $z_{f1}$.
    \item Sensory feat.: This is any conjunction involving sensory features, i.e.\ $z_{f1},z_{f2},(z_{f1},z_{f2})$.
    \item Context only: This is the component $z_\text{co}$ by itself.
    \item Memorization: This is the full conjunction of all three components $(z_\text{co},z_{f1},z_{f2})$.
\end{itemize}
We then compute the average absolute magnitude within each of these groups in order to determine their overall relevance to model behavior.

\subsubsection{Feature-learning ReLU networks}
We find that feature-learning ReLU networks generalize consistently on \textit{CD-1} and \text{CD-2} but not \textit{CD-3}. They are also perfectly conjunction-wise additive and fail due to a context shortcut (\cref{fig:cdm-rich}).
\begin{figure}
    \centering
    \includegraphics{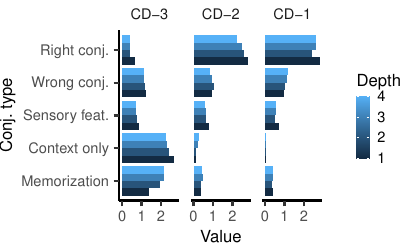}
    \caption{Inferred values for different conjunctive groups on context dependence.}
    \label{fig:cdm-rich}
\end{figure}

\subsubsection{Vision networks}
We additionally analyzed ConvNets for different distances between the digits. We found that convolutional neural networks trained on MNIST successfully generalized on \textit{CD-1} and \textit{CD-2}, but not \text{CD-3} (\cref{fig:cdm-mnist}). For smaller distances between digits, the models tended to generalize worse on \textit{CD-1} and \textit{CD-2} and gradually reverted to chance accuracy (i.e.\ 0.5) for \textit{CD-3}. Further, the networks were generally highly additive ($R^2>0.95$), but became worse for lower distance (\cref{fig:cdm-mnist}b). Finally, across all distances, they had a high magnitude associated with the context cue, though this magnitude decreased for small distances --- consistent with the accuracy of the network increasing from below chance to chance level (\cref{fig:cdm-mnist}c).
\begin{figure}
    \centering
    \includegraphics{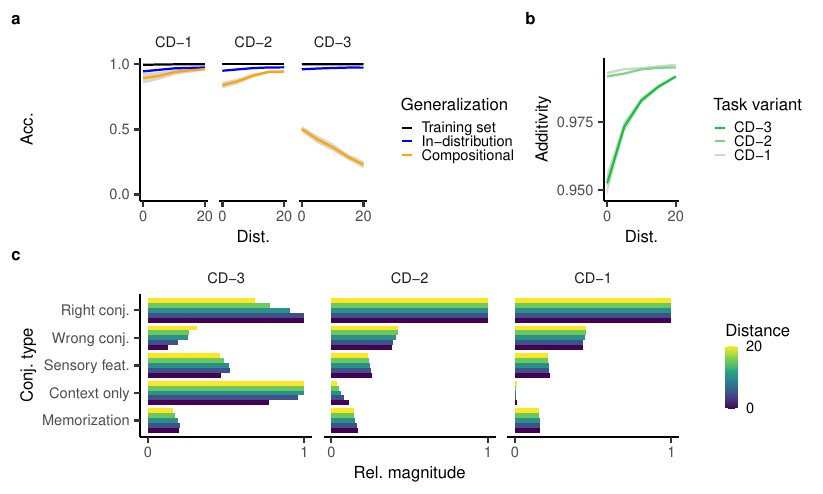}
    \caption{Performance of convolutional neural networks trained on an MNIST version of context dependence. For different distances and training sets, we plot \textbf{a}, the accuracy on different splits, \textbf{b}, the additivity of the networks, and \textbf{c}, the magnitude of the different inferred coefficients.}
    \label{fig:cdm-mnist}
\end{figure}

\subsection{Invariance and partial exposure}
\label{app:partial}
We consider invariance and partial exposure as simple case studies for the memorization leak and shortcut bias. In both cases, the input consists of two components and the mapping only depends on the first (\cref{fig:partial}a). In the invariance case, we don't see the second component vary at all, in the partial exposure case, we see one instance of the second component. Note that the partial exposure task has previously been studied in the context of network generalization \citep{dasgupta_distinguishing_2022,chan_transformers_2022}.

To understand the impact of different representational saliences, we consider the generalization margin $m:=y\hat{y}$ on the test set, where $y$ is the ground-truth label and $\hat{y}$ is the model's estimate. Because we consider support vector machines, the margin on the training set is one; a smaller margin on the test set indicates worse performance. We determined a mathematical formula for the margins as a function of $S(1;2)$. Below we first describe its implications and then how we derived this.

\paragraph{Invariance suffers from a memorization leak.}
On invariance, we find that the model's test margin is expressed as
$m=\tfrac{S(1;2)}{1-S(1;2)}$.
This means that for a fully compositional representation ($S(1;2)=0.5$), its training and test margins are both one. However, as $S(1;2)$ decreases, the model increasingly memorizes the training set, resulting in a decreased margin (\cref{fig:partial}b).

\paragraph{Shortcut distortion.}
On the partial exposure task, if the model used item 1 to solve the task, it would get two out of three training examples correct and could memorize the last data point. This is a statistical shortcut and we find that norm minimization (just as for context dependence) ends up partially relying on this strategy as this decreases the $\ell_2$-norm of the readout weights. As a result, the test margin for the partial exposure task decreases even more strongly as a function of $S(1;2)$: $m=\tfrac{2S(1;2)^2}{1-2S(1;2)^2}$ (where we assume that the similarity between identical trials is one and the similarity between distinct trials is zero) (\cref{fig:partial}b).
\begin{figure}
    \centering
    \includegraphics{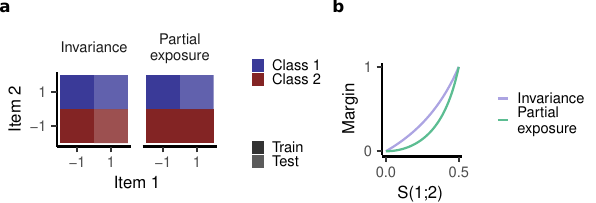}
    \caption{\textbf{a}, Schema for invariance and partial exposure. \textbf{b}, Generalization margin for the two tasks.}
    \label{fig:partial}
\end{figure}

\paragraph{Derivations.}
We analytically compute the kernel models' test set prediction on the invariance task. The training set is given by $\{(-1,-1),(-1,1)\}$ and its kernel is therefore
\begin{equation}
    K=\begin{pmatrix}
        \kappa_2&\kappa_1\\\kappa_1&\kappa_2
    \end{pmatrix},
\end{equation}
where $\kappa_2$ is the similarity between identical trials and $\kappa_1$ is the similarity between overlapping trials.
Hence, the dual coefficients are given by
\begin{equation}
    a = K^{-1}\begin{pmatrix}
        1\\-1
    \end{pmatrix}=\frac{1}{\kappa_2^2-\kappa_1^2}\begin{pmatrix}
        \kappa_2&-\kappa_1\\-\kappa_1&\kappa_2
    \end{pmatrix}\begin{pmatrix}
        1\\-1
    \end{pmatrix}=\frac{1}{\kappa_2^2-\kappa_1^2}\begin{pmatrix}
        \kappa_2+\kappa_1\\-(\kappa_2+\kappa_1)
    \end{pmatrix}.
\end{equation}
The test set is given by $\{(1,-1),(1,1)\}$ and its kernel with respect to the training set is therefore
\begin{equation}
    \tilde{K}=\begin{pmatrix}
        \kappa_1&\kappa_0\\\kappa_0&\kappa_1
    \end{pmatrix},
\end{equation}
where $\kappa_0$ is the similarity between distinct trials. Hence the test set predictions are given by
\begin{equation}
    \hat{y}=\tilde{K}a=\frac{1}{\kappa_2^2-\kappa_1^2}\begin{pmatrix}
        (\kappa_2+\kappa_1)(\kappa_1-\kappa_0)\\-(\kappa_2+\kappa_1)(\kappa_1-\kappa_0).
    \end{pmatrix}
\end{equation}
As the ground truth labels are $y=\{1,-1\}$, the margin $m=y\hat{y}$ is identical for both test set points:
\begin{equation}
    m=\frac{(\kappa_2+\kappa_1)(\kappa_1-\kappa_0)}{\kappa_2^2-\kappa_1^2}=\frac{\kappa_1-\kappa_0}{\kappa_2-\kappa_1}=\frac{(\kappa_2-\kappa_0)S(1;2)}{(\kappa_2-\kappa_0)-(\kappa_1-\kappa_0)}=\frac{S(1;2)}{1-S(1;2)}.
\end{equation}

For partial exposure, the training set is given by $\{(-1,-1),(-1,1),(1,-1)\}$ and its kernel is therefore
\begin{equation}
    K=\begin{pmatrix}
        \kappa_2&\kappa_1&\kappa_1\\
        \kappa_1&\kappa_2&\kappa_0\\
        \kappa_1&\kappa_0&\kappa_2
    \end{pmatrix}.
\end{equation}
The test set is given by $\{(1,1)\}$ and the test set kernel is therefore
\begin{equation}
    \tilde{K}=\begin{pmatrix}
        \kappa_0&\kappa_1&\kappa_1
    \end{pmatrix}
\end{equation}
The margin is therefore given by
\begin{equation}
    m=y\hat{y}=-\hat{y}=-\tilde{K}K^{-1}\begin{pmatrix}
        1\\-1\\1
    \end{pmatrix}.
\end{equation}
We solve this equation for the special case where $\kappa_0=0$ and $\kappa_1=1$ using Mathematica and find that
\begin{equation}
    m=\frac{2S(1;2)^2}{1-2S(1;2)^2}.
\end{equation}
\subsection{Other mathematical operations}
We could consider mathematical operations other than addition as well, considering unobserved assigned values $v_1(z_1)$ and $v_2(z_2)$ together with some composition function $C(v_1(z_1),v_2(z_2))$. This task will only be additive if the composition function is additive (e.g. if it is subtraction). If it is, e.g. multiplication, division, or exponentiation, the task will be non-additive.
\subsection{Logical operations}
In this task, inputs with two components are presented. Each component $z_c$ has an unobserved truth value $T(z_c)$ associated with it and the target is some logical operation over these two truth values, for example \textit{AND}: $T(z_1)\wedge T(z_2)$. After inferring the truth value of each component, the model could generalize towards novel item combinations. As long as the logical operation is additive (e.g.\ \textit{AND}, \textit{OR}, \textit{NEITHER}, $\dotsc$), this is an additive task. If the logical operation is non-additive (e.g. \textit{XOR}), this would be a non-additive task. Indeed, the \textit{XOR} case is structurally equivalent to the transitive equivalence task.
\subsection{Transitive ordering}
Transitive ordering is a popular task in cognitive science (often called transitive inference, \cite{mcgonigle_are_1977}). Here the subject is presented with two items $z_1$, $z_2$ drawn from an unobserved hierarchy $>$. It should then categorize whether $z_1>z_2$ or $z_2>z_1$. Crucially, this task can be solved by assigning a rank $r(z_c)$ to each item and computing the response as $f(z)=r(z_1)-r(z_2)$ \cite{lippl_mathematical_2024}. It is therefore additive, in contrast to transitive equivalence. This is also the case if we assume that there are multiple such hierarchies (e.g.\ $a_1>\dotsc, a_5$ and $b_1>\dotsc,b_5$). In this case, the model would generalize to comparisons between these different hierarchies as well.

\section{A feature-learning mechanism for non-additive compositional generalization}
\label{app:equivalence}
\begin{figure}
    \centering
    \includegraphics{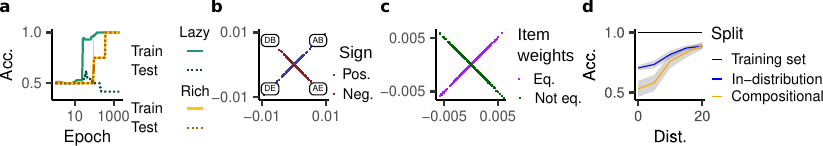}
    \caption{Feature learning enables generalization on transitive equivalence. \textbf{a}, Learning curves in the lazy and rich regime. \textbf{b}, The weights of the network in the subspace corresponding to one underlying \textit{XOR}-task. \textbf{c}, Weights for the same unit are plotted against each other and colored by whether they correspond to equivalent items (purple) or non-equivalent items (green). \textbf{d}, Accuracy of a ConvNet trained on an MNIST version of transitive equivalence.}
    \label{fig:rich}
\end{figure}
We proved above that compositionally structured kernel models do not generalize on non-additive tasks, including transitive equivalence. To see if feature learning can overcome this limitation, we trained ReLU networks through backpropagation on transitive equivalence, using a disentangled and uncorrelated input. By varying the initial weight magnitude, we either trained these networks in the kernel/lazy regime or the feature-learning/rich regime. Notably, when trained on symbolic addition and context dependence, the rich networks were well-described by a conjunction-wise additive model (\cref{fig:supp-deep,fig:cdm-rich}). On transitive equivalence, however, while the kernel-regime models failed to generalize (as predicted by our theory), the rich networks generalized correctly (\cref{fig:rich}a).

To explain why this is the case, we leveraged the insight that rich neural networks are biased to learning weights with a low overall $\ell_2$-norm (\cite{lyu_gradient_2020,chizat_implicit_2020}; cf.\ \cite{vardi_implicit_2021}). In particular, a one-hidden-layer ReLU network tends to learn a sparse set of features \cite{savarese_how_2019,chizat_implicit_2020}.
Transitive equivalence consists of multiple overlapping equality relations (e.g.\ $A=B\neq D=E$). Notably, ReLU networks such an \textit{XOR}-type problem by specializing one unit to each conjunction (\cite{brutzkus_why_2019,saxe_neural_2022,xu_benign_2023}; \cref{fig:rich}b).
Further, their sparse inductive bias incentivizes ReLU networks to use identical units for overlapping conjunctions (e.g.\ $(A,B),(A,C)$, and $(B,C)$). This causes the unit to generalize to unseen item combinations (e.g.\ $(A,C)$), enabling the network to generalize. Importantly, our theoretical argument is corroborated by empirical simulations: each network unit has identical weights for equivalent items (\cref{fig:rich}c).

Thus, rich networks' capacity for abstraction gives rise to an additional compositional motif, allowing them to generalize on transitive equivalence. In particular, our findings highlight that transitive equivalence and transitive ordering are solved by fundamentally different network motifs.


To see whether large-scale neural networks can also benefit from this feature-learning mechanism, we trained ConvNets on an MNIST version of transitive equivalence. The networks were trained for 150 epochs on 20,000 samples. We found that if the digits were presented with a distance of zero, the network did not generalize compositionally at all. However, with increasing distance, the network started to improve its compositional generalization (\cref{fig:rich}d), demonstrating that a convolutional network can benefit from this rich compositional motif.

\begin{figure}
    \centering
    \includegraphics{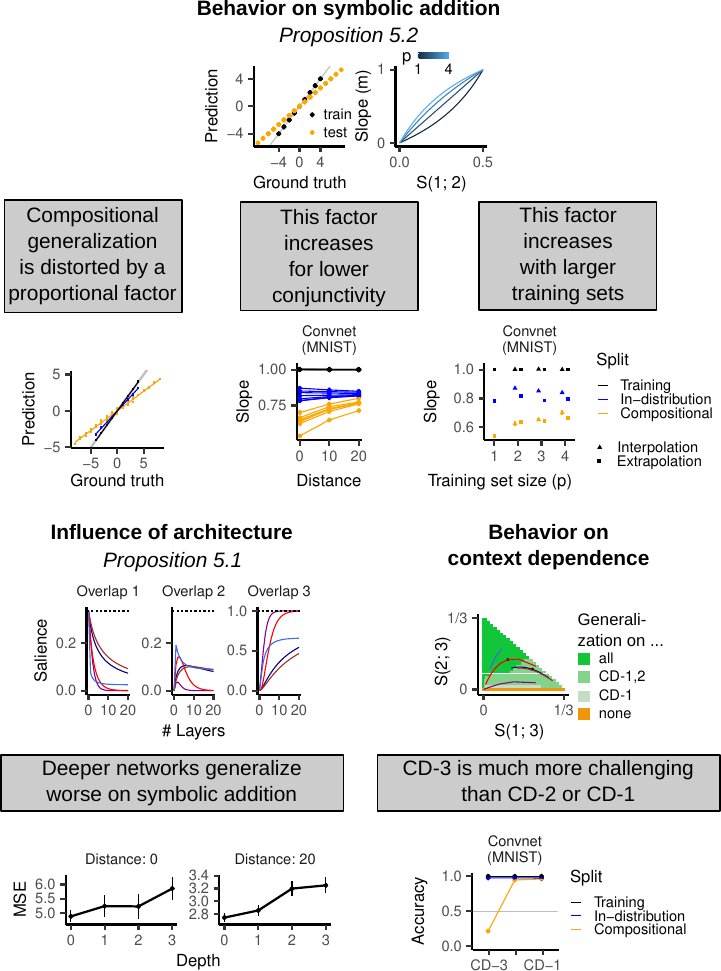}
    \caption{This figure composes the relevant theoretical insights and empirical experiments to illustrate how they speak to each other.}
    \label{fig:schema-2}
\end{figure}

\end{document}